%
%
%
%
\documentclass[mmnp]{edpsmath}
\usepackage{graphicx}%
\usepackage{multirow}%
\usepackage{amsmath,amssymb,amsfonts}%
\usepackage{amsthm}%
\usepackage{mathrsfs}%
\usepackage[title]{appendix}%
\usepackage{xcolor}%
\usepackage{textcomp}%
\usepackage{manyfoot}%
\usepackage{booktabs}%
\usepackage{algorithm}%
\usepackage{algorithmicx}%
\usepackage{algpseudocode}%
\usepackage{listings}%
\usepackage{pdflscape}
\usepackage[colorlinks=true]{hyperref}
\hypersetup{urlcolor=blue, citecolor=red}
\newcommand{\poubelle}[1]{}


\theoremstyle{thmstyleone}%
\newtheorem{theorem}{Theorem}
\newtheorem{proposition}[theorem]{Proposition}%
\newtheorem{lemma}[theorem]{Lemma}

\theoremstyle{thmstyletwo}%
\newtheorem{remark}{Remark}%

\theoremstyle{thmstylethree}%
\newtheorem{definition}{Definition}%

\raggedbottom
\begin{document}
\title[Dynamical analysis of a microbial decomposition model in soil]{Dynamical analysis and numerical simulation of a reaction-diffusion model for microbial decomposition of organic matter in 3D soil structure}
\author{Mohammed Elghandouri}\address{LMDP Laboratory, Cadi Ayyad University, B.P: 2390, Marrakesh, Morocco / IRD, UMMISCO, F-93143, Bondy, Paris, France. E-mail: medelghandouri@gmail.com}
\author{Ezzinbi Khalil}\address{LMDP Laboratory, Cadi Ayyad University, B.P: 2390, Marrakesh, Morocco / IRD, UMMISCO, F-93143, Bondy, Paris, France. E-mail: ezzinbi@uca.ac.ma}
\author{Klai Mouad}\address{LMDP Laboratory, Cadi Ayyad University, B.P: 2390, Marrakesh, Morocco / IRD, UMMISCO, F-93143, Bondy, Paris, France. E-mail: klaimouadklai@gmail.com}
\author{Olivier Monga}\address{IRD, UMMISCO, F-93143, Bondy, Paris, France. E-mail: olivier.monga@ird.fr}
\date{July 20, 2024}
\begin{abstract} Microbial decomposition of organic matter in soil is a fundamental process in the global carbon cycle, directly influencing soil health, fertility, and greenhouse gas emissions. This paper presents a dynamic analysis and numerical simulation of a reaction-diffusion model that describes microbial decomposition of organic matter within a three-dimensional soil structure. We explore the interactions between microbial biomass and organic substrates, and diffusion of different compounds through the soil matrix using nonlinear parabolic partial differential equations. Our study provides proofs for the existence and uniqueness of solutions, as well as an analysis of asymptotic behavior. Notably, our investigation reveals the presence of a global attractor, where any solution, regardless of initial conditions, tends to converge. To illustrate the practical implications of our findings, we have developed a numerical tool to simulate the long-term behavior of the system with reasonable computational expense. This tool provides a visual proof of the global attractor for a validated set of biological parameters in a real sandy loam soil sample captured using 3D tomography imagery. \end{abstract}
\subjclass{35A01, 35B41, 37C70, 65L10, 65L12.}
\keywords{Biological Dynamics in Soil, Non-linear Parabolic Partial differential equations, Global attractor, Pore Space Modelling, Simulation in Complex Shapes.}
\maketitle
\section*{Introduction}
Microbial activity in soil is vital for preserving soil health and ensuring ecosystem functionality. Microorganisms play key roles in nutrient cycling, breaking down organic matter, and fostering plant growth \cite{intr1,intr2}. The vast diversity and intricate interactions within soil present a challenge to fully grasp its complexity. Furthermore, soil's heterogeneous and multifaceted nature makes directly observing and measuring microbial activity difficult \cite{intr3}. Traditional methods, such as culturing and microscopy, are not only time-consuming but also offer limited insights. Hence, there is a pressing need for alternative methods for studying soil microbial activity \cite{intr4}. Over the past two decades, numerous studies have concentrated on modeling the dynamics within the complex soil structure. These efforts seek to mimic how soil components move and interact in porous media and fractured environments. Several significant contributions have emerged from these studies, with models typically integrating transport mechanisms with sophisticated reaction (transformation) processes to simulate the activity of microorganisms and compounds within the soil matrix. Significant attention has been given to computational models, focusing on the development of numerical tools to simulate transformation and diffusion processes in the complex pore space of soil. Various methodologies have been employed to describe chemical transport phenomena, primarily the Lattice-Boltzmann Method (LBM) \cite{intr5} and pore network geometrical models (also known as morphological models) \cite{intr6}. On the other hand, the mathematical analysis of reaction-diffusion models has only been investigated in a limited number of studies. One notable example is found in \cite{7}, where the authors used nonlinear parabolic Partial Differential Equations (PDEs) to model the microbial decomposition of organic matter in soil. This approach allowed for a detailed representation of the biochemical processes occurring within the soil matrix. To solve these complex equations, the authors employed FreeFEM++, a powerful finite element software, to simulate the model based on the variational formulation of the PDEs. However, further research is needed to analyse and refine these mathematical models.

In the present paper, we investigate a nonlinear parabolic PDE model describing microbial activity in soil and provide a formal analysis of the model, proving the existence and uniqueness of a global solution. Then, we prove the existence of a global attractor, where any solution starting from any initial distribution tends to converge. Furthermore, we provide a numerical tool to simulate the long-term behavior of the system in a reasonable time using an optimal sphere network that approximates the pore space of soil. Our research begins by describing a biological model that represents the dynamic processes occurring within the soil. This model encapsulates key biological parameters and system behaviors, shedding light on organic matter decomposition by microorganisms and the production of mineralized CO$_2$, and taking into account the diffusion of different compounds through the soil matrix.
To rigorously analyze this biological system, we employ Partial Differential Equations (PDEs) as our mathematical tool of choice. Specifically, we formulate a set of parabolic nonlinear PDEs that elegantly capture the diffusion and transformation processes expressing the degradation of organic matter by microorganisms taking place within the 3D soil structure. These equations are accompanied by Neumann boundary conditions, reflecting the system's behavior at its boundaries. Before diving into the core of our analysis, we provide an overview of the mathematical tools and concepts essential to our proof of the global attractor for the PDE model \cite{5,pliss,yoshizawa,milor,sell}. These preliminary insights based on dynamical systems theory set the stage for the subsequent sections, where we unveil the intriguing properties of the system. One of our primary objectives is to establish the global existence of solutions to the PDE model. We demonstrate that the problem possesses a unique positive mild solution defined over an unbounded time interval, $ [ 0,+\infty [$. This foundational result forms the basis for our exploration of the system's long-term behavior.  At the core of our research, we reveal the presence of a global attractor which is a compact, connected, and invariant subset of the positive points comprising the system's phase space. Importantly, this global attractor possesses the remarkable capacity to attract and encapsulate all bounded subsets of the phase space, thereby affording invaluable insights into the long-term behavior of the system. To enhance the robustness of our theoretical findings, we employ a geometric approach to model the intricate pore space within the soil structure. This methodology involves adapting the PDE model to fit the novel framework of the pore network model, a technique that has been previously validated \cite{17}. The approach exhibits exceptional capabilities in simulating the long-term behavior of the system and considered as a powerful alternative of Lattice Boltzmann method in simulating diffusion in complex geometries. 
In our investigations, we harness this method to explore a range of real parameters and scenarios on a real soil sample captured using computed tomographic imagery. 

The description of this work is organized as follows. In Section \ref{Sec 2}, we present our biological model. In Section \ref{Sec 3}, we introduce our model using partial differential equations (PDEs). Section \ref{Sec 4} revisits essential properties related to global attractors for partial differential equations and discusses properties crucial for proving our main results. In Section \ref{Sec 5}, we delve into the global existence of our PDEs system. Section \ref{Sec 6} establishes the existence of a global attractor for our model. Section \ref{Sec 7} is dedicated to presenting numerical simulations for visualizing the attractor. Finally, we showcase and discuss our results in Section \ref{Sec 8}.
\section{General description of the biological model} \label{Sec 2}
\noindent

Our objective is to simulate and model the process of organic carbon mineralization by microorganisms within the soil pore space. We improve the methodology outlined in \cite{11,12,7} to simulate the organic matter decomposition by microorganisms in the soil pore space by solving a PDEs system. Microorganisms degrade organic matter in accordance with the laws of supply and demand. The method presented in \cite{7} is based on the law of conservation of mass to obtain a reaction-diffusion model that describes the diffusion of organic matter. In our approach, we assume that the decomposition of soil organic matter involves five key biological components. 

Microbial activity is governed by the diffusion of various compounds within the soil's pores and by transformation processes. We model the dynamics of microbial decomposition using five key compounds, as illustrated in figure \ref{Fig 0}: 
\begin{itemize}
	\item Microbial biomass (MB): Represents the mass of microorganisms in the sample.
	\item 	Microbial respiration (CO2): Indicates the produced carbon dioxide through microbial decomposition, indicative of microbial growth.
	\item Fresh organic matter (FOM): Derived from recently added or deposited plant and easily decomposable 
	\item Soil organic matter (SOM): Consists of various organic compounds in different stages of decomposition, originating from biomass turnover and less accessible to decomposition.
	\item Dissolved organic matter (DOM): Refers to organic compounds dissolved in soil water, originating from hydrolyze of FOM and SOM and biomass recycling, available for microbial uptake or transport within the pore space.
\end{itemize}
Dissolved organic matter arises from the decomposition of both slow-decomposing soil organic matter and fast-decomposing fresh organic matter. Microorganisms grow through the assimilation of DOM and respire by producing CO2. Upon death, they are transformed back into organic matter. The biological process is summarized in Figure \ref{Fig 0}.
\begin{figure}[h]
	\centering
	\includegraphics[scale=0.4]{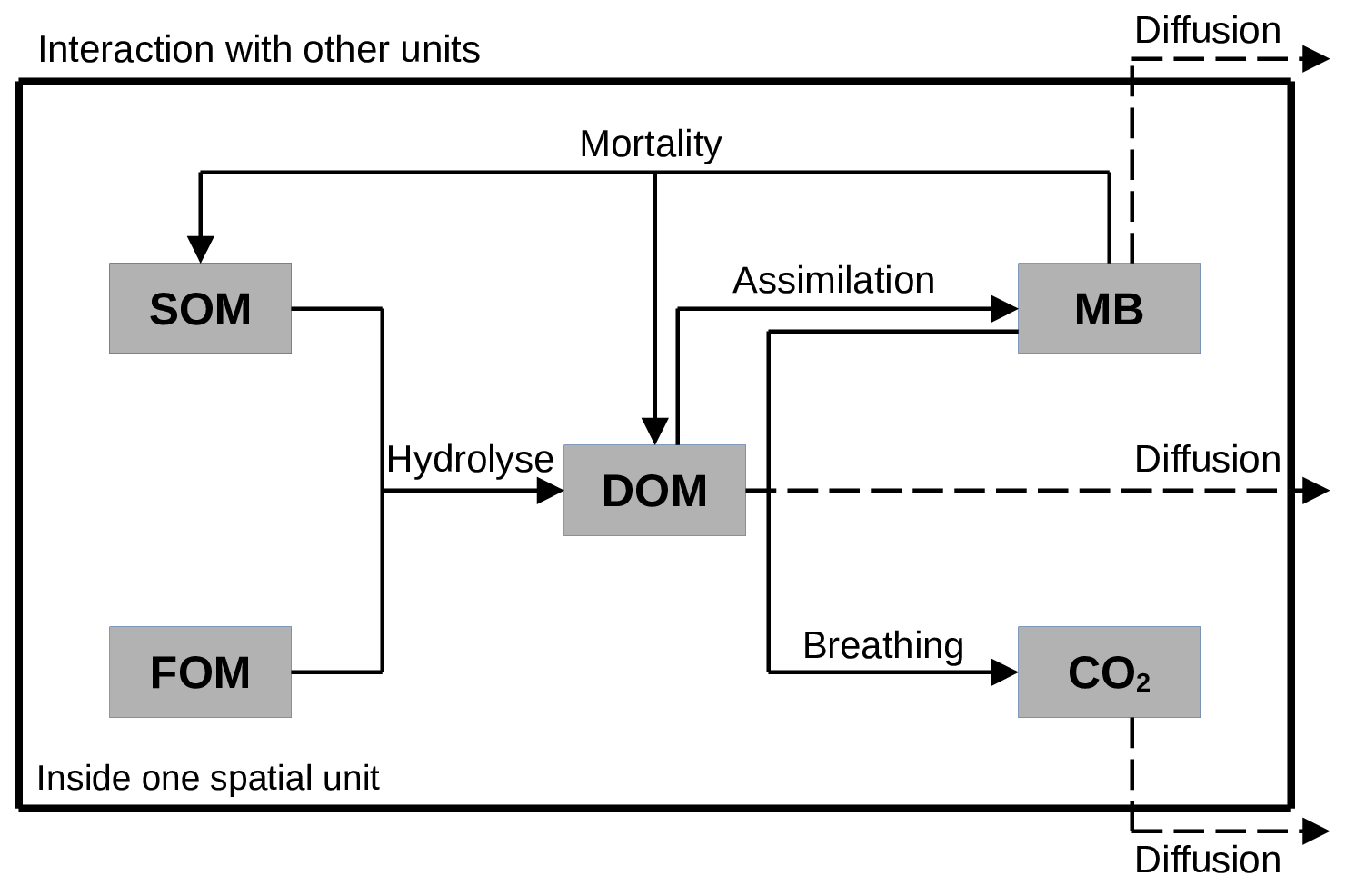}
	\caption{Biological processes simulated inside one spatial unit and the
		interactions with external units.}
	\label{Fig 0}
\end{figure}

\section{Mathematical formulation of the model} \label{Sec 3}
\noindent

Let $\Omega\subset \mathbb{R}^{3}$ be the domain representing the 3D soil pore space, $t\geq 0$ the time given, and $x=(x_1,x_2,x_3)^{T}\in\Omega$ be a point of the pore space. We denote by $b(x,t)$, $n(x,t)$, $m_1(x,t)$, $m_2(x,t)$, and $c(x,t)$ the densities of the microorganisms biomass (MB), the dissolved organic matter (DOM), the soil organic matter (SOM), the fresh organic matter (FOM), and the carbon dioxide (CO$_2$) at position $x\in \Omega$ and time $t\geq 0$ respectively.

Let $\mathcal{V}$ be an individual volume in $\Omega$. The density evolution of the microbial biomass (MB) in $\mathcal{V}$ depends on the diffusion of the microorganisms, the growth of the microorganisms, the mortality of the microorganisms, and the breathing of the microorganisms. It is assumed that the microorganisms biomass (MB) consumes the dissolved organic matter to growth. The variation of the microbial biomass density is described by the following equation:
\begin{equation*}
\dfrac{\partial b}{\partial t}=D_b\Delta b+\dfrac{Kn}{K_b+n}b-(\mu+\eta)b,
\end{equation*}
where $D_b$ is the microbial decomposer's diffusion coefficient, $K$ is the maximal growth rate, $K_b$ is the half-saturation constant, $\mu$ is the mortality rate, and $\eta$ is the breathing rate.

The density evolution of the dissolved organic matter (DOM) in $\mathcal{V}$ depends on the (DOM) diffusion, the assimilation of the dissolved organic matter by the microbial biomass decomposer's, the microbial biomass decomposer's mortality, and the transformation of the soil organic matter and the fresh organic matter. The equation that describe the (DOM) density variation is the following:
\begin{equation*}
\dfrac{\partial n}{\partial t}=D_n\Delta n-\dfrac{Kn}{K_b+n}b+c_1m_1+c_2m_2+\rho\mu b,
\end{equation*}
where $D_n$ is the (DOM) diffusion coefficient, $c_1$ is the maximal transformation rate of the soil organic matter, $c_2$ is the maximal transformation rate of the fresh organic matter, and $\rho \mu$ is the transformation rate of deceased microbial biomass decomposer's into dissolved organic matter (DOM).

The variation in soil organic matter (SOM) quantity arises from the conversion of a portion of (SOM) into dissolved organic matter and from the mortality of microbial biomass. Thus, the density evolution of the soil organic matter (SOM) in $\mathcal{V}$ is presented by the following equation:
\begin{equation*}
\dfrac{\partial m_1}{\partial t}=-c_1m_1+(1-\rho)\mu b.
\end{equation*}

The density evolution of the fresh organic matter (FOM) in $\mathcal{V}$ is presented by the following equation:
\begin{equation*}
\dfrac{\partial m_2}{\partial t}=-c_2m_2.
\end{equation*}  

The evolution of the dioxide carbon (CO$_2$) density in $\mathcal{V}$ depends on its diffusion and its production by the microbial biomass decomposer's during breathing. The (CO$_2$) quantity variation is represented by the following equation:
\begin{equation*}
\dfrac{\partial c}{\partial t}= D_c\Delta c+\eta b,
\end{equation*}
where $D_c$ is the carbon dioxide diffusion coefficient.

Therefore, the microbial decomposition of organic matter in soil is described by the following PDEs system:
\begin{equation}
\left\{\begin{array}{l}
\dfrac{\partial b}{\partial t}=D_b\Delta b+\dfrac{Kn}{K_b+n}b-(\mu+\eta)b,\\ \\
\dfrac{\partial n}{\partial t}=D_n\Delta n-\dfrac{Kn}{K_b+n}b+c_1m_1+c_2m_2+\rho\mu b,\\ \\
\dfrac{\partial m_1}{\partial t}=-c_1m_1+(1-\rho)\mu b,\\ \\
\dfrac{\partial m_2}{\partial t}=-c_2m_2,\\ \\
\dfrac{\partial c}{\partial t}= D_c\Delta c+\eta b.
\end{array}\right.
\label{eq 1.0}
\end{equation}

On the border $\partial \Omega$ of $\Omega$, we use Neuman boundary conditions, which means that the flow is null on $\partial \Omega$ for all variables, i.e,
\begin{equation}
\dfrac{\partial b}{\partial \overrightarrow{n}} = \dfrac{\partial n}{\partial \overrightarrow{n}}=\dfrac{\partial m_1}{\partial \overrightarrow{n}}=\dfrac{\partial m_2}{\partial \overrightarrow{n}}=\dfrac{\partial c}{\partial \overrightarrow{n}}= 0, \quad on \quad \partial\Omega\times (0,T).
\label{eq 2.0}
\end{equation}

For the initial conditions  in $\Omega$, we denote by
\begin{equation}
(b(0,\cdot),n(0,\cdot),m_1(0,\cdot),m_2(0,\cdot),c(0,\cdot))=(b_0,n_0,m_{1,0},m_{2,0},c_0).
\label{eq 3.0}
\end{equation}

To simplify \eqref{eq 1.0}-\eqref{eq 3.0} into a vector form equation, we define:  the state system $u$ as:
\begin{center}
	$u= (u_1,u_2,u_3,u_4,u_5)^{T} = (b,n,m_1,m_2,c)^{T}$.
\end{center}
The initial state $u_0$ as:
\begin{center}
	$u_0=(b_0,n_0,m_{1,0},m_{2,0},c_0)^{T}$.
\end{center}
The diffusion coefficients matrix $\underline{D}$ as:
\begin{center}
	$\underline{D}=\begin{pmatrix}
	D_b & 0 & 0 & 0 & 0 \\
	0 & D_n & 0 & 0 & 0 \\
	0 & 0 & 0 & 0 & 0 \\
	0 & 0 & 0 & 0 & 0 \\
	0 & 0 & 0 & 0 & D_c \\
	\end{pmatrix}$.
\end{center}
The reaction diffusion terms vector $F(u)$ is given by:
\begin{center}
	$F(u)=(F_1(u),F_2(u),F_3(u),F_4(u),F_5(u))$,
\end{center}
where 
\begin{equation*}
\left.\begin{array}{l}
F_1(u)=\dfrac{Ku_2}{K_b+u_2}u_1-(\mu+\eta)u_1,\\ \\
F_2(u) =-\dfrac{Ku_2}{K_b+u_2}u_1+c_1u_3+c_2u_4+\rho \mu u_1,\\ \\
F_3(u)=-c_1u_3+(1-\rho)\mu u_1,\\ \\
F_4(u)=-c_2u_4,\\ \\
F_5(u)= \eta u_1.
\end{array}\right.
\end{equation*}

Finally, equation \eqref{eq 1.0}-\eqref{eq 3.0} can be reduced to:
\begin{equation}
\left\{\begin{array}{l}
\vspace{0.3cm}
\dfrac{\partial u}{\partial t}= \underline{D} \Delta u+F(u), \quad in\quad \Omega\times(0,T)\\ 
\vspace{0.3cm}
\dfrac{\partial u}{\partial \overrightarrow{n}} = 0, \quad on \quad \partial\Omega\times (0,T)\\ 
u(0)=u_0, \quad in \quad \Omega.
\end{array}\right.
\label{eq 1.1}
\end{equation}  
\subsection{Why do we need to look for an attractor?}
\noindent

Solving the equation $F(u) = 0$ allows us to determine the set of constant equilibrium points of equation \eqref{eq 1.1}, which is given by:
\begin{center}
	$S=\left\{ (0,u_2^{*},0,0,u_5^{*}): \hspace{0.1cm} u_2^{*},u_5^{*}\in \mathbb{R} \right\}$.
\end{center}
This notation indicates that the equilibrium points are characterized by specific values for the variables $u_2^*$ and $u_5^*$, both of which belong to the set of real numbers. An important observation is that these equilibrium points are not isolated. This lack of isolation has implications for our analytical methods. Specifically, it means that we cannot employ techniques like linearization or Lyapunov's method, which are often used to analyze the behavior of systems near isolated equilibrium points. Furthermore, we do not expect to observe convergence to the equilibrium points, which makes understanding the system's behavior as it approaches equilibrium challenging. Given these challenges, our emphasis lies in establishing the existence of a global attractor for equation \eqref{eq 1.1}.

\section{Basic working tools} \label{Sec 4}
\noindent

In the sequel, we recall some useful properties on global attractor for partial differential equations. For more details, we refer  to \cite{5}.

\begin{definition} \cite{5}
	A semiflow $(\mathcal{U}(t))_{t\geq 0}$ on a complete metric space $\mathcal{X}$ is a one parameter family maps $\mathcal{U}(t):\mathcal{X}\to \mathcal{X}$, parameter $t\in \mathbf{R}^{+}$, such that
	\begin{enumerate}
		\item[i)]  $\mathcal{U}(0)=id_{\mathcal{X}}$ ($id_{\mathcal{X}}$ the identity map on $\mathcal{X}$).
		\item[ii)] $\mathcal{U}(t+s)=\mathcal{U}(t)\mathcal{U}(s)$ for $t,s\geq 0$.
		\item[iii)] $t\to \mathcal{U}(t)x$ is continuous for all $x\in \mathcal{X}$.
	\end{enumerate}
\end{definition}

\begin{definition} 
	Let $(\mathcal{U}(t))_{t\geq 0}$ be a semiflow on a complete metric space $\mathcal{X}$.
	\begin{enumerate}
		\item[i)] We say that $(\mathcal{U}(t))_{t\geq 0}$ is bounded, if it takes bounded sets into bounded sets.
		\item[ii)] We say that $B\subset\mathcal{X}$ is invariant under $(\mathcal{U}(t))_{t\geq 0}$ if $\mathcal{U}(t)B= B$ for $t\geq 0$.
	\end{enumerate}
\end{definition}

\begin{definition} Let $(\mathcal{U}(t))_{t\geq 0}$ be a semiflow on a complete metric space $\mathcal{X}$.
	We say that $(\mathcal{U}(t))_{t\geq 0}$ is point dissipative if there is a bounded set $B\subset \mathcal{X}$ which attracts each point of $\mathcal{X}$ under $(\mathcal{U}(t))_{t\geq 0}$.
\end{definition}

\begin{theorem} \cite{5} Let $(\mathcal{U}(t))_{t\geq 0}$ be a semiflow on a complete space $\mathcal{X}$ with the metric $\Vert\cdot\Vert$. Assume that there exists $L>0$ such that for all $x_0\in \mathcal{X}$, $\limsup\limits_{t\to +\infty}\Vert \mathcal{U}(t)x_0\Vert\leq L$, then $(\mathcal{U}(t))_{t\geq 0}$ is point dissipative.
\end{theorem}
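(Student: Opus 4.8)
The plan is to produce an explicit bounded set $B$ and to check pointwise attraction directly from the hypothesis, so that the whole argument reduces to unwinding the definitions of $\limsup$ and of ``point dissipative''. Since the bound $L$ controls the $\limsup$ of $\Vert\mathcal{U}(t)x_0\Vert$ \emph{uniformly} in the initial datum $x_0$, the natural candidate is the fixed closed ball
\begin{equation*}
B=\left\{x\in\mathcal{X}:\ \Vert x\Vert\leq L+1\right\},
\end{equation*}
which is bounded by construction and, crucially, does not depend on $x_0$.

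First I would fix an arbitrary $x_0\in\mathcal{X}$ and recall that $\limsup_{t\to+\infty}\Vert\mathcal{U}(t)x_0\Vert\leq L$ means $\lim_{T\to+\infty}\sup_{t\geq T}\Vert\mathcal{U}(t)x_0\Vert\leq L$. Taking $\varepsilon=1$ in this definition yields a time $T=T(x_0)\geq 0$ such that $\Vert\mathcal{U}(t)x_0\Vert\leq L+1$ for every $t\geq T$; equivalently $\mathcal{U}(t)x_0\in B$ for all $t\geq T$. Consequently the distance $\mathrm{dist}(\mathcal{U}(t)x_0,B)$ vanishes for all $t\geq T$, hence tends to $0$ as $t\to+\infty$. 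Since $x_0$ was arbitrary and $B$ is the same bounded set for every orbit, $B$ attracts each point of $\mathcal{X}$, which is exactly the assertion that $(\mathcal{U}(t))_{t\geq 0}$ is point dissipative.

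The argument carries no genuine obstacle: its entire content is recognizing that the uniform $\limsup$ bound furnishes a \emph{single} bounded set absorbing each orbit in finite (though datum-dependent) time. The only point deserving care is keeping the set $B$ independent of $x_0$ while allowing the entry time $T(x_0)$ to vary with the initial condition; this is precisely what the quantifier order in the hypothesis ``for all $x_0$, $\limsup\leq L$'' guarantees. I would also remark that any radius $L+\varepsilon$ with $\varepsilon>0$ works equally well, and even the ball of radius $L$ suffices if one computes $\mathrm{dist}(\,\cdot\,,B)$ explicitly, but the choice $L+1$ avoids that computation by forcing the orbit to lie eventually \emph{inside} $B$.
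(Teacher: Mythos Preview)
Your argument is correct and is exactly the standard unwinding of the definitions: a uniform $\limsup$ bound gives a single bounded absorbing ball, with only the entry time depending on the initial datum. Note, however, that the paper does not supply its own proof of this statement; it is quoted as a background result from Hale \cite{5}, so there is no in-paper argument to compare against.
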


\begin{definition} 	Let $(\mathcal{U}(t))_{t\geq 0}$ be a semiflow on a complete metric space $(\mathcal{X},d)$.  A subset $B$ of $\mathcal{X}$ is said to attract  $C\subset \mathcal{X}$ if $d(\mathcal{U}(t)C,B)\to 0$ as $t\to +\infty$, 
	where 
	\begin{center}
		\begin{center}
			$d(\mathcal{U}(t)C,B):=\inf\{ d(\mathcal{U}(t)x,y); \hspace{0.1cm} x\in B \text{ and } y\in C \}$.
		\end{center}
	\end{center}
\end{definition}

\begin{definition} \cite{5} Let $\mathcal{X}$ be a complete metric space and $(\mathcal{U}(t))_{t\geq 0}$ be a semiflow on $\mathcal{X}$. Let $\mathcal{A}$ be a subset of $\mathcal{X}$. Then, $\mathcal{A}$ is called a global attractor for $(\mathcal{U}(t))_{t\geq 0}$ in $\mathcal{X}$ if $\mathcal{A}$ is a closed and bounded invariant set of $\mathcal{X}$ that attracts every bounded set of $\mathcal{X}$.
\end{definition}

The following Theorem shows the existence of global attractor for point dissipative semiflows.

\begin{theorem} \cite[Theorem 4.1.2, page 63]{5} Let $(\mathcal{U}(t))_{t\geq 0}$ be a semiflow on a complete metric space $\mathcal{X}$. Suppose that $\mathcal{U}(t)$ is bounded for each $t\geq 0$. If $(\mathcal{U}(t))_{t\geq 0}$ is point dissipative, then it has a connected global attractor.
	\label{Theorem 4.1.2}
\end{theorem}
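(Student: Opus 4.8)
The plan is to realize the attractor as the $\omega$-limit set of a bounded absorbing set, following the classical construction for dissipative semiflows. First I would exploit point dissipativity: by hypothesis there is a bounded set $B_0$ that attracts each point of $\mathcal{X}$. The initial task is to upgrade this pointwise attraction to a single bounded \emph{absorbing} set $B$, i.e.\ a bounded set such that for every bounded $C\subset\mathcal{X}$ there is $t_C\geq 0$ with $\mathcal{U}(t)C\subset B$ for all $t\geq t_C$. This passage from ``point dissipative'' to ``bounded dissipative'' is where I expect the real difficulty to lie: it cannot follow from boundedness of the maps alone and requires an asymptotic-compactness (asymptotic smoothness) property of the semiflow, which for a reaction--diffusion system like \eqref{eq 1.1} is supplied by the smoothing effect of the analytic semigroup $e^{t\underline{D}\Delta}$. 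I would cover a large ball by finitely many translates of the attracting set, track forward images under $\mathcal{U}(t)$, and combine boundedness with the compactness of the tails of orbits to produce the uniform absorbing time.

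Once a bounded absorbing set $B$ is in hand, I would set
\[
\mathcal{A} \;=\; \omega(B) \;=\; \bigcap_{s\geq 0}\;\overline{\bigcup_{t\geq s}\mathcal{U}(t)B}.
\]
The next steps are then largely structural. Nonemptiness and \emph{invariance} ($\mathcal{U}(t)\mathcal{A}=\mathcal{A}$) follow from the semigroup property (ii) and the continuity (iii) of the semiflow together with the absorbing property of $B$, via the standard characterization of $\omega$-limit sets as sets of subsequential limits $\lim_n \mathcal{U}(t_n)x_n$ with $t_n\to+\infty$ and $x_n\in B$. \emph{Compactness} of $\mathcal{A}$ is again the crux and rests on the same asymptotic-compactness input: any sequence $\mathcal{U}(t_n)x_n$ with $t_n\to+\infty$ must possess a convergent subsequence, whence $\omega(B)$ is a nonempty compact set. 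To obtain that $\mathcal{A}$ \emph{attracts every bounded set}, I would first show $\mathcal{A}$ attracts $B$ itself (a direct consequence of the definition of the $\omega$-limit set plus compactness, argued by contradiction), and then use that $B$ absorbs every bounded $C$ to transfer the attraction from $B$ to all of $\mathcal{X}$.

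Finally, for \emph{connectedness} I would argue in the ambient Banach space underlying $\mathcal{X}$. Replacing $B$ by the closed convex hull of a bounded absorbing set gives a bounded, connected set $C$ whose forward images $\mathcal{U}(t)C$ are connected, being continuous images of a connected set. Since $\mathcal{A}=\omega(C)$ is approximated in the Hausdorff sense by the connected sets $\mathcal{U}(t)C$ as $t\to+\infty$, and a compact limit of connected sets is connected, the conclusion follows: were $\mathcal{A}=\mathcal{A}_1\cup\mathcal{A}_2$ a partition into two disjoint nonempty compact pieces, the sets $\mathcal{U}(t)C$ would eventually lie in a disjoint union of neighborhoods of $\mathcal{A}_1$ and $\mathcal{A}_2$ while remaining connected, a contradiction. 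This yields the connected global attractor asserted in Theorem~\ref{Theorem 4.1.2}, with all the genuinely hard work concentrated in establishing the asymptotic compactness that powers both the construction of the absorbing set and the compactness of $\omega(B)$.
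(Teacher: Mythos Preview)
The paper does not supply its own proof of this statement: Theorem~\ref{Theorem 4.1.2} appears in Section~\ref{Sec 4} (``Basic working tools'') purely as a citation of \cite[Theorem 4.1.2, page 63]{5}, with no argument given. So there is nothing in the paper to compare your proposal against.

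That said, your outline is exactly the classical construction one finds in Hale's monograph: upgrade point dissipativity to the existence of a bounded absorbing set $B$, realize the attractor as $\mathcal{A}=\omega(B)$, and then check invariance, compactness, attraction, and connectedness in turn. You are also right to flag that the hypotheses as quoted in the paper are not sufficient on their own: in a general complete metric space, boundedness of each $\mathcal{U}(t)$ together with point dissipativity does \emph{not} force a compact global attractor without some form of asymptotic smoothness or eventual compactness of the semiflow. Hale's actual Theorem 4.1.2 carries such a hypothesis, and it is precisely what makes the two steps you single out (passage to a bounded absorbing set, and precompactness of the sequences $\mathcal{U}(t_n)x_n$) go through. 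For the specific system \eqref{eq 1.1} this compactness is supplied, as you note, by the smoothing of the diffusion semigroup, but that is an input about the application rather than part of the abstract theorem. Your connectedness argument via the convex hull of $B$ is the standard one and is fine in the Banach-space setting relevant here.
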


In the sequel, we need the following Lemma.

\begin{lemma}  \cite[Theorem 11.3 with $h=0$, page 99]{2} \itshape Let $v$ be a real, continuous and nonnegative function such that
	\begin{center}
		$v(t)\leq c+\displaystyle\int_{t_0}^{t}w(t,s)v(s)ds$, \text{ for } $t\geq t_0$,
	\end{center}
	where $c>0$, $w(t,s)$ is continuously differentiable in $t$ and continuous in $s$ with $w(t,s)\geq 0$ for $t\geq s\geq t_0$.
	Then,
	\begin{center}
		$v(t)\leq c\exp\left(\displaystyle\int_{t_0}^{t}\left[w(s,s)+\displaystyle\int_{t_0}^{s}\dfrac{\partial w(s,r)}{\partial s} dr\right]ds\right)$, \text{ for } $t\geq t_0$.
	\end{center}
	\label{lem 2.2}
\end{lemma}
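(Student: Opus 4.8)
The plan is to dominate $v$ by the majorant defined by the right-hand side of the hypothesis and to convert the integral inequality into a scalar linear differential inequality. First I would set $V(t) := c + \int_{t_0}^t w(t,s)\,v(s)\,ds$, so that $v(t)\le V(t)$ for all $t\ge t_0$, while $V(t_0)=c>0$ and $V$ is continuous; since $c>0$ the majorant stays strictly positive. Because $w$ is continuously differentiable in its first argument and $v$ is continuous, the Leibniz integral rule applies and gives
\[
V'(t) = w(t,t)\,v(t) + \int_{t_0}^t \frac{\partial w(t,s)}{\partial t}\,v(s)\,ds.
\]

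The decisive step is to replace $v$ by $V$ on the right and to recognize the bracket appearing in the conclusion. Introducing $\Phi(t) := \int_{t_0}^t w(t,s)\,ds$, the same Leibniz rule yields $\Phi'(t) = w(t,t) + \int_{t_0}^t \frac{\partial w(t,s)}{\partial t}\,ds$, which is exactly the bracketed quantity in the statement (with the running variable renamed). Using $v(s)\le V(s)\le V(t)$ together with the sign of the kernel, one reaches the differential inequality $V'(t)\le \Phi'(t)\,V(t)$, i.e. $(\log V)'(t)\le \Phi'(t)$, which is legitimate since $V\ge c>0$. Integrating from $t_0$ to $t$ and using $\Phi(t_0)=0$ then gives
\[
V(t)\le c\,\exp\!\Big(\int_{t_0}^t \Phi'(s)\,ds\Big)= c\,\exp\!\Big(\int_{t_0}^t\Big[\,w(s,s)+\int_{t_0}^s \frac{\partial w(s,r)}{\partial s}\,dr\,\Big]\,ds\Big),
\]
and since $v\le V$ this is precisely the claimed estimate (note that by the fundamental theorem of calculus the exponent also equals $\Phi(t)=\int_{t_0}^t w(t,s)\,ds$).

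I expect the main obstacle to be justifying the replacement of $v(s)$ by $V(t)$ inside the integral, which is transparent only once the kernel's monotonicity in its first argument is used. Indeed, if $\partial_t w\ge 0$, then the formula for $V'$ shows $V'\ge 0$ (all three terms being nonnegative), so $V$ is nondecreasing and $v(s)\le V(s)\le V(t)$ holds, closing the differential-inequality argument; the same monotonicity is what makes the frozen-kernel comparison $v(\tau)\le c+\int_{t_0}^\tau w(T,s)\,v(s)\,ds$ valid on $[t_0,T]$, after which the classical constant-in-$t$ Gronwall lemma applies and setting $T=t$ recovers the bound. Since the bare hypotheses list only $w\ge 0$, the crux is really this monotonicity, which is the natural structural assumption on such kernels and is presumably explicit in the cited Theorem~11.3; I would either invoke it directly or, failing that, fall back on the Picard-type iteration that expands $v$ into a Neumann series whose $n$-th term is the iterated-kernel integral $\int\!\cdots\!\int w(t,s_1)w(s_1,s_2)\cdots w(s_{n-1},s_n)\,ds_n\cdots ds_1$ over the simplex $t_0\le s_n\le\cdots\le s_1\le t$, the technical heart being to dominate this by $c\,\Phi(t)^n/n!$ and to sum the series to $c\,\exp(\Phi(t))$.
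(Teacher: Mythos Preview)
The paper does not prove this lemma at all; it is simply quoted from Bainov--Simeonov \cite[Theorem~11.3]{2}, so there is no ``paper's proof'' to compare against. Your argument via the majorant $V(t)=c+\int_{t_0}^t w(t,s)v(s)\,ds$, Leibniz differentiation, and the logarithmic differential inequality $(\log V)'\le \Phi'$ is the standard one and is carried out correctly. Your side remark that the exponent equals $\Phi(t)=\int_{t_0}^t w(t,s)\,ds$ by the fundamental theorem of calculus is also correct and worth keeping.

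Your instinct about the ``main obstacle'' is exactly right, and it is not merely a technicality: the conclusion is \emph{false} under the bare hypotheses listed in the paper. A concrete counterexample is $t_0=0$, $w(t,s)=2-t$ on $[0,2]$ (so $w\ge 0$ but $\partial_t w=-1<0$): the extremal $v$ solving $v(t)=c+(2-t)\int_0^t v(s)\,ds$ satisfies $v(1)\approx 3.14\,c$, whereas the lemma's bound is $c\exp\big(\int_0^1(2-2s)\,ds\big)=ce\approx 2.72\,c$. So the hypothesis $\partial_t w(t,s)\ge 0$ (equivalently, $w$ nondecreasing in its first argument) is genuinely needed; the paper has evidently dropped it when transcribing Theorem~11.3. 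With that hypothesis your proof is complete: nonnegativity of each term in $V'=w(t,t)v(t)+\int_{t_0}^t \partial_t w(t,s)v(s)\,ds$ gives $V'\ge 0$, hence $v(s)\le V(s)\le V(t)$, and the bound $V'\le \Phi'(t)V(t)$ follows.

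Two further remarks. First, your Picard--iteration fallback does not sidestep the issue: bounding the iterated kernels $\int\!\cdots\!\int w(t,s_1)w(s_1,s_2)\cdots$ by $\Phi(t)^n/n!$ again requires $w(s_i,\cdot)\le w(t,\cdot)$, i.e.\ the same monotonicity. Second, you should be aware that the paper's own applications of this lemma (Proposition~\ref{thm 5.2} and Theorem~\ref{thm 5.1}) use the kernel $w(t,s)=(\lambda+C_1)e^{-\lambda(t-s)}$, which has $\partial_t w<0$; the uniform-in-$t$ bounds claimed there therefore do not follow from this Gronwall lemma, regardless of which version one cites.
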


\section{Global existence for equation \eqref{eq 1.1}}\label{Sec 5}
\noindent

The space $(\mathbb{L}^{2}(\Omega))^{5}$ endowed with the norm $\Vert \cdot\Vert_{ (\mathbb{L}^{2}(\Omega))^{5}}=\sum\Vert \cdot\Vert_{\mathbb{L}^{2}(\Omega)}$ issue from the inner product $(\cdot,\cdot)$ defined by 
\begin{center}
	$(v,w)=\sum\limits_{i=1}^{5}\langle v_i,w_i\rangle_{\mathbb{L}^{2}(\Omega)}=\sum\limits_{i=1}^{5}\displaystyle\int_{\Omega}v_i(x)w_i(x)dx$,
\end{center} 
for $v=(v_1, v_2,v_3,v_4,v_5)$, $w=(w_1, w_2,w_3,w_4,w_5)\in (\mathbb{L}^{2}(\Omega))^{5} $  is a Hilbert space.

In order to rewrite equation (\ref{eq 1.1}) in an abstract form, we define de operator $(A,D(A))$ by 
\begin{equation*}
\left\{\begin{array}{l}
D(A)=(H_0^2(\Omega))^{2}\times (\mathbb{L}^{2}(\Omega))^{2}\times H_0^2(\Omega)\\
Av=\underline{D} \Delta v, \text{ for } v\in D(A),
\end{array}\right.
\end{equation*}
where
\begin{center}
	$H^{2}_0(\Omega)=\left\{w\in \mathbb{L}^{2}(\Omega):\hspace{0.1cm}such \hspace{0.1cm} that\hspace{0.1cm}\nabla w, \hspace{0.1cm}\Delta w\in \mathbb{L}^{2}(\Omega), \hspace{0.1cm} and \hspace{0.1cm} \dfrac{\partial w}{\partial \overrightarrow{n}}=0\right\}$.
\end{center}
Then, equation (\ref{eq 1.1}) can be transformed to the following reaction-diffusion form:
\begin{equation}
\left\{\begin{array}{l}
u'(t)=Au(t)+F(u(t)), \text{ for } t\geq 0\\
u(0)=u_0.
\end{array}\right.
\label{eq 3.1}
\end{equation}
Let $\lambda>0$. Then, equation $(\ref{eq 3.1})$ can be written in the following equivalent form: 
\begin{equation}
\left\{\begin{array}{l}
u'(t)=(A-\lambda I)u(t)+G(u(t)), \text{ for } t\geq 0\\
u(0)=u_0,
\end{array}\right.
\label{eq 3.2}
\end{equation} 
where $G(v)=F(v)+\lambda v$ for $v\in $ $(\mathbb{L}^{2}(\Omega))^{5}$. 

In the next, we denote by 
\begin{center}
	$X:=(\mathbb{L}^{2}(\Omega))^{5}$ endowed with the norm $\Vert \cdot\Vert:=  \Vert \cdot\Vert_{(\mathbb{L}^{2}(\Omega))^{5}}$,
\end{center}
and
\begin{center}
	$\Lambda_{+}:=\left\{v=(v_1, v_2,v_3,v_4,v_5)\in X: \hspace{0.1cm} v_i\geq 0, \text{ for }  i=1,\cdots,5\right\}$.
\end{center}
\begin{remark} We have 
	\begin{eqnarray*}
		G(v)&=& F(v)+\lambda v=\begin{pmatrix}
			\dfrac{Kv_2}{K_b+v_2}v_1+ \left(\lambda-(\mu+\eta)\right)v_1\\
			c_1v_3+c_2v_4+\left(\lambda - \dfrac{Kv_2}{K_b+v_2}\right)v_1+\rho \mu v_1\\
			\left(\lambda -c_1\right)v_3+(1-\rho)\mu v_1\\
			\left(\lambda-c_2\right) v_4\\
			\eta v_1+\lambda v_5
		\end{pmatrix},
	\end{eqnarray*}
	for each $v\in \Lambda_{+}$. Since, $\dfrac{Kz}{K_b+z}\leq K$ for each $z\in \mathbb{R}^{+}$, if $\lambda>0$ is sufficiently large, then $G(v)\geq 0$ for each $v\in \Lambda_{+}$.
	\label{rem 2}
\end{remark}
\begin{lemma} \itshape There exists a positive constant $C_1>0$ such that $\Vert F(v)\Vert \leq C_1\Vert v\Vert$  for each  $v\in \Lambda_{+}$.
	\label{lem 3.1}
\end{lemma}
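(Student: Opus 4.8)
The plan is to bound each component $F_i(v)$ pointwise on $\Omega$ by a fixed linear combination of the moduli $|v_1|,\dots,|v_5|$, and then pass to $L^2$ norms. The only genuinely nonlinear term in $F$ is the Monod-type expression $\dfrac{Kv_2}{K_b+v_2}v_1$, which appears (with opposite signs) in $F_1$ and $F_2$. The key observation, already recorded in Remark \ref{rem 2}, is that for $v\in\Lambda_{+}$ we have $v_2\geq 0$, hence $\dfrac{v_2}{K_b+v_2}\leq 1$ and therefore $0\leq \dfrac{Kv_2}{K_b+v_2}\leq K$. Consequently $\left|\dfrac{Kv_2}{K_b+v_2}v_1\right|\leq K|v_1|$ pointwise, so this nonlinear contribution is dominated by a multiple of $|v_1|$. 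This is precisely the place where positivity of the components is used: $v\in\Lambda_{+}$ guarantees $v_2\geq 0$, which keeps the saturation factor bounded. Without it, $\dfrac{Kv_2}{K_b+v_2}$ could blow up near $v_2=-K_b$, and no linear bound would hold.

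Using this together with the triangle inequality, I would estimate each component pointwise, obtaining for instance $|F_1(v)|\leq (K+\mu+\eta)|v_1|$, $|F_2(v)|\leq (K+\rho\mu)|v_1|+c_1|v_3|+c_2|v_4|$, $|F_3(v)|\leq (1-\rho)\mu|v_1|+c_1|v_3|$, $|F_4(v)|\leq c_2|v_4|$, and $|F_5(v)|\leq \eta|v_1|$. Each right-hand side is a fixed linear combination of the $|v_j|$ with coefficients depending only on the model parameters $K,K_b,\mu,\eta,\rho,c_1,c_2$.

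Next, taking the $L^2(\Omega)$ norm of each of these inequalities and applying the triangle inequality in $L^2(\Omega)$ yields $\Vert F_i(v)\Vert_{\mathbb{L}^2(\Omega)}\leq \sum_{j=1}^{5}a_{ij}\Vert v_j\Vert_{\mathbb{L}^2(\Omega)}$ for suitable nonnegative constants $a_{ij}$. Summing over $i=1,\dots,5$ and recalling that $\Vert\cdot\Vert=\sum_{i=1}^{5}\Vert\cdot\Vert_{\mathbb{L}^2(\Omega)}$, I obtain
\[
\Vert F(v)\Vert \leq \sum_{j=1}^{5}\Bigl(\sum_{i=1}^{5}a_{ij}\Bigr)\Vert v_j\Vert_{\mathbb{L}^2(\Omega)}\leq C_1\sum_{j=1}^{5}\Vert v_j\Vert_{\mathbb{L}^2(\Omega)}=C_1\Vert v\Vert,
\]
where one may take $C_1:=\sum_{i,j}a_{ij}$. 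This establishes the claim.

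There is no serious obstacle here: the statement is a linear-growth estimate for $F$ on the positive cone $\Lambda_{+}$, and the whole difficulty is concentrated in controlling the saturation nonlinearity, which is resolved by the elementary inequality $\dfrac{Kz}{K_b+z}\leq K$ valid for all $z\geq 0$. Once that is in hand, the remaining steps are routine applications of the triangle inequality and the definition of the norm on $X$.
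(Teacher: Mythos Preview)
Your proof is correct and follows essentially the same approach as the paper: both hinge on the observation that $\dfrac{Kv_2}{K_b+v_2}\leq K$ for $v_2\geq 0$, so the Monod term is controlled by $K|v_1|$, after which everything reduces to linear bounds and the triangle inequality. The only cosmetic difference is that the paper squares the pointwise estimates first and then uses $\sqrt{a+b}\leq\sqrt{a}+\sqrt{b}$, whereas you bound $|F_i(v)|$ directly and apply the triangle inequality in $L^2$; your route is slightly cleaner but not substantively different.
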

\begin{proof} Let $v=(v_1, v_2,v_3,v_4,v_5)\in \Lambda_{+}$. Since,
	\begin{equation*}
	\left.\begin{array}{l}
	\vert F_1(v)(x)\vert^{2}\leq 2\left( K^{2}+(\mu+\eta)^{2}\right)v_1(x)^{2};\\ \\ 
	\vert F_2(v)(x)\vert^{2}\leq 4\left[ (K^{2}+(\rho \mu)^2)v_1(x)^{2}+ c_1^2v_3(x)^2+c_2^2v_4(x)^2\right];\\ \\
	\vert F_3(v)(x)\vert^{2}\leq 2\left[  c_1^2v_3(x)^2+((1-\rho)\mu)^2v_1(x)^2\right];\\ \\
	\vert F_4(v)(x)\vert^{2}\leq c_2^2v_4(x)^2 ;\\ \\
	\vert F_5(v)(x)\vert^{2}\leq \eta^2v_1(x)^2.
	\end{array}\right.
	\end{equation*}
	It follows that
	\begin{eqnarray*}
		\Vert F(v)\Vert &=& \sum_{i=1}^{5}\Vert F_i(v)\Vert_{\mathbb{L}^{2}(\Omega)}\\ 
		&=& \sum_{i=1}^{5}\left(\displaystyle\int_{\Omega}\vert F_i(v)(x)\vert^2 dx\right)^{1/2}\\ 
		&=& \left(\displaystyle\int_{\Omega}2\left( K^{2}+(\mu+\eta)^{2}\right)v_1(x)^{2} dx\right)^{1/2}\\ 
		& & + \hspace{0.1cm} \left(\displaystyle\int_{\Omega}4\left[ (K^{2}+(\rho \mu)^2)v_1(x)^{2}+ c_1^2v_3(x)^2+c_2^2v_4(x)^2\right] dx\right)^{1/2}\\
		& & + \hspace{0.1cm} \left(\displaystyle\int_{\Omega}2\left[  c_1^2v_3(x)^2+((1-\rho)\mu)^2v_1(x)^2\right] dx\right)^{1/2}+ \left(c_2^2\displaystyle\int_{\Omega}v_4(x)^2dx\right)^{1/2}\\
		& & + \hspace{0.1cm} \left(\eta^2\displaystyle\int_{\Omega}v_1(x)^2dx\right)^{1/2}\\ \\
		&=& \sqrt{2}\left( K^{2}+(\mu+\eta)^{2}\right)^{1/2}\Vert v_1\Vert_{\mathbb{L}^{2}(\Omega)}\\
		& & + \hspace{0.1cm} 2 \left( K^{2}\Vert v_1\Vert_{\mathbb{L}^{2}(\Omega)}^{2}+ c_1^2\Vert v_3\Vert_{\mathbb{L}^{2}(\Omega)}^2+c_2^2\Vert v_4\Vert_{\mathbb{L}^{2}(\Omega)}^2+(\rho\mu)^2\Vert v_1\Vert_{\mathbb{L}^{2}(\Omega)}^2\right)^{1/2} \\
		& & + \hspace{0.1cm} \sqrt{2} \left(  c_1^2\Vert v_3\Vert_{\mathbb{L}^{2}(\Omega)}^2+((1-\rho)\mu)^2\Vert v_1\Vert_{\mathbb{L}^{2}(\Omega)}^2 \right)^{1/2}+c_2\Vert v_4\Vert_{\mathbb{L}^{2}(\Omega)}+\eta \Vert v_1\Vert_{\mathbb{L}^{2}(\Omega)}.
	\end{eqnarray*}
	We use the fact that $\sqrt{a+b}\leq \sqrt{a}+\sqrt{b}$ for $a,b\geq 0$, we get that 
	
	\begin{eqnarray*}
		\Vert F(v)\Vert  &\leq & \left( \sqrt{2\left( K^{2}+(\mu+\eta)^{2}\right)} + \eta + 2K+\sqrt{2}((1-\rho)\mu)^2 \right)\Vert v_1\Vert_{\mathbb{L}^{2}(\Omega)}\\
		& & + \hspace{0.1cm} \sqrt{2}\left(\sqrt{2}+1\right)c_1 \Vert v_3\Vert_{\mathbb{L}^{2}(\Omega)}+3 c_2 \Vert v_4\Vert_{\mathbb{L}^{2}(\Omega)}\\
		&\leq & C_1 \Vert v\Vert,
	\end{eqnarray*}
	
	where
	
	\begin{eqnarray*}
		C_1&=&\max\left[1, \sqrt{2\left( K^{2}+(\mu+\eta)^{2}\right)} + \eta + 2K+\sqrt{2}((1-\rho)\mu)^2), \sqrt{2}\left(\sqrt{2}+1\right)c_1, 3 c_2 \right].
	\end{eqnarray*}
\end{proof}

\begin{lemma}\itshape For each $r>0$, there exists a positive constant $ C_2(r)>0$ such that 
	\begin{center}
		$\Vert F(u)-F(v)\Vert \leq C_2(r)\Vert u-v\Vert $, \text{ for } $u,v\in B(0,r)\cap \Lambda_{+}$,
	\end{center}
	where $B(0,r)$ is the open ball (in $X$) of center $0$ and radius $r$.
	\label{Lemma 5.2}
\end{lemma}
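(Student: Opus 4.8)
The strategy is to notice that only one term of $F$ is genuinely nonlinear, namely the Monod saturation factor $g(z):=\dfrac{Kz}{K_b+z}$, which enters $F_1$ and $F_2$ (with opposite signs) through the product $g(u_2)u_1$. Every remaining contribution is affine: the rows $F_3,F_4,F_5$, the term $-(\mu+\eta)u_1$ in $F_1$, and the term $c_1u_3+c_2u_4+\rho\mu u_1$ in $F_2$. For all of these, the difference at $u$ and $v$ is linear in $u-v$, so by the triangle inequality applied component by component in $(\mathbb{L}^{2}(\Omega))^{5}$ each is bounded by a fixed constant (depending only on $K,\mu,\eta,\rho,c_1,c_2$, not on $r$) times $\Vert u-v\Vert$. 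Thus the whole estimate reduces to controlling $\Vert g(u_2)u_1-g(v_2)v_1\Vert_{\mathbb{L}^{2}(\Omega)}$.

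For the Monod factor I would first record two elementary facts on $[0,+\infty[$: $g$ is bounded, $0\le g(z)\le K$, and $g$ is Lipschitz, since $g'(z)=\dfrac{KK_b}{(K_b+z)^2}$ yields $|g'(z)|\le K/K_b$. The positivity encoded in $\Lambda_+$ is what makes these uniform, because it guarantees $K_b+u_2\ge K_b>0$ pointwise and keeps the denominators away from zero. I would then split
\[
g(u_2)u_1-g(v_2)v_1 = g(u_2)(u_1-v_1) + \left(g(u_2)-g(v_2)\right)v_1 ,
\]
and estimate the first summand at once by $K\,\Vert u_1-v_1\Vert_{\mathbb{L}^{2}(\Omega)}$ using $g(u_2)\le K$ pointwise.

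The heart of the argument, and the step I expect to be the main obstacle, is the second summand $\left(g(u_2)-g(v_2)\right)v_1$. The Lipschitz bound gives the pointwise inequality $|g(u_2)-g(v_2)|\le \dfrac{K}{K_b}|u_2-v_2|$, which leaves the product $|u_2-v_2|\,|v_1|$ to be estimated in $\mathbb{L}^{2}(\Omega)$. This is precisely where the restriction to the ball $B(0,r)$ becomes indispensable: a product of two $\mathbb{L}^{2}$ functions is in general only integrable, not square-integrable, so one cannot proceed by boundedness of $g$ alone. The plan is to use the membership $v\in B(0,r)$ to treat $v_1$ as a controlled factor and bound $\Vert (u_2-v_2)v_1\Vert_{\mathbb{L}^{2}(\Omega)}$ by $r\,\Vert u_2-v_2\Vert_{\mathbb{L}^{2}(\Omega)}$, which produces the coefficient $\dfrac{Kr}{K_b}$ in front of $\Vert u_2-v_2\Vert$. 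This is the only place where the radius enters, and it accounts for the dependence of $C_2(r)$ on $r$; making this product estimate fully rigorous is the delicate point, since it really needs $v_1$ to act as an (essentially) bounded multiplier.

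Finally I would gather the two occurrences of the Monod estimate (one in the $F_1$ row, one in the $F_2$ row) together with the affine contributions, bound each $\Vert u_i-v_i\Vert_{\mathbb{L}^{2}(\Omega)}$ by $\Vert u-v\Vert$, and read off $C_2(r)$ as the maximum of the resulting coefficients. The outcome is a constant of the affine form $C_2(r)=\alpha+\beta r$, with $\alpha,\beta$ depending only on the biological parameters, which is exactly the $r$-dependent local Lipschitz constant claimed.
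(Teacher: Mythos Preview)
Your direct component-by-component strategy is much more explicit than the paper's proof, which is the single sentence ``The proof follows the fact that $F$ is $\mathcal{C}^{1}$ from $\Lambda_{+}$ to $X$.'' So methodologically you are doing something different: an explicit Lipschitz estimate rather than an appeal to differentiability.

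However, the step you yourself flag as ``the delicate point'' is in fact a genuine gap, not just a matter of rigor. The inequality you want,
\[
\Vert (u_2-v_2)\,v_1\Vert_{\mathbb{L}^{2}(\Omega)} \le r\,\Vert u_2-v_2\Vert_{\mathbb{L}^{2}(\Omega)}\quad\text{when }\Vert v_1\Vert_{\mathbb{L}^{2}(\Omega)}\le r,
\]
is false in general: membership in an $\mathbb{L}^{2}$-ball does not make $v_1$ a bounded multiplier on $\mathbb{L}^{2}$. A product of two $\mathbb{L}^{2}$ functions is only guaranteed to lie in $\mathbb{L}^{1}$, and one can build counterexamples where $\Vert (u_2-v_2)v_1\Vert_{\mathbb{L}^{2}}$ is arbitrarily large while both factors stay in the unit ball. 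Your own parenthetical remark (``really needs $v_1$ to act as an essentially bounded multiplier'') is exactly right, and it means the argument as written does not close.

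It is worth noting that the paper's one-line proof does not actually resolve this issue either: asserting that $F$ is $\mathcal{C}^{1}$ on $(\mathbb{L}^{2}(\Omega))^{5}$ runs into the same obstruction, since the candidate Fr\'echet derivative of $g(u_2)u_1$ contains the term $g'(u_2)u_1\cdot h_2$, which again requires $u_1$ to be an $\mathbb{L}^{\infty}$ multiplier to land in $\mathbb{L}^{2}$. So you have correctly located the real analytic difficulty; the paper simply asserts its way past it. A clean fix would require either working in a smaller space with a multiplicative structure (e.g.\ $\mathbb{L}^{\infty}$ or a Sobolev space embedding into $\mathbb{L}^{\infty}$), or restricting the ball $B(0,r)$ to an $\mathbb{L}^{\infty}$-ball rather than an $\mathbb{L}^{2}$-ball.
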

\begin{proof} The proof follows the fact that $F$ is $\mathcal{C}^{1}$ from $\Lambda^{+}$ to $X$.
\end{proof}
\begin{theorem} \itshape
	Operator $(A,D(A))$ generates a $C_0$-semigroup $(\mathcal{T}(t))_{t\geq 0}$ of contractions on $X$.
\end{theorem}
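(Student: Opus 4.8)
The plan is to invoke the Lumer--Phillips theorem: on the Hilbert space $X=(\mathbb{L}^{2}(\Omega))^{5}$ with inner product $(\cdot,\cdot)$, a densely defined dissipative operator $A$ for which $\mathrm{Range}(\lambda I-A)=X$ holds for some $\lambda>0$ generates a $C_0$-semigroup of contractions. Density of $D(A)=(H_0^2(\Omega))^{2}\times(\mathbb{L}^{2}(\Omega))^{2}\times H_0^2(\Omega)$ in $X$ is immediate, since each factor is dense in $\mathbb{L}^{2}(\Omega)$. It then remains to check dissipativity and the range (maximality) condition.

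For dissipativity I would compute $(Av,v)$ for $v=(v_1,\dots,v_5)\in D(A)$. Because $\underline{D}$ is diagonal with entries $D_b,D_n,0,0,D_c$, the third and fourth components contribute nothing and the inner product collapses to
\[(Av,v)=D_b\langle\Delta v_1,v_1\rangle+D_n\langle\Delta v_2,v_2\rangle+D_c\langle\Delta v_5,v_5\rangle.\]
Applying Green's formula to each surviving term and using the homogeneous Neumann condition $\partial v_i/\partial\overrightarrow{n}=0$ built into the definition of $H_0^2(\Omega)$, the boundary integrals vanish, leaving
\[(Av,v)=-D_b\|\nabla v_1\|_{\mathbb{L}^{2}(\Omega)}^{2}-D_n\|\nabla v_2\|_{\mathbb{L}^{2}(\Omega)}^{2}-D_c\|\nabla v_5\|_{\mathbb{L}^{2}(\Omega)}^{2}\le 0,\]
so $A$ is dissipative (everything being real-valued).

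For the range condition I would fix $\lambda>0$, take an arbitrary $f=(f_1,\dots,f_5)\in X$, and solve $(\lambda I-A)u=f$ componentwise. The third and fourth equations are purely algebraic, $\lambda u_3=f_3$ and $\lambda u_4=f_4$, and are solved trivially with $u_3,u_4\in\mathbb{L}^{2}(\Omega)$. The remaining equations are three decoupled Neumann problems $\lambda u_i-D_i\Delta u_i=f_i$ for $i\in\{1,2,5\}$; each is solved by Lax--Milgram applied to the coercive bilinear form $a(u,w)=\lambda\langle u,w\rangle+D_i\langle\nabla u,\nabla w\rangle$ on $H^{1}(\Omega)$, and elliptic regularity for the Neumann problem promotes the weak solution to $H_0^2(\Omega)$. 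Hence $\mathrm{Range}(\lambda I-A)=X$, and Lumer--Phillips yields the contraction semigroup $(\mathcal{T}(t))_{t\ge 0}$.

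The one point genuinely requiring care is the degeneracy of $\underline{D}$: since its third and fourth rows vanish, $A$ is not a (scaled) Laplacian on all of $X$ but a block operator that is purely diffusive on the $b,n,c$ slots and the zero operator on the $m_1,m_2$ slots. This must be respected both in the domain, where only $\mathbb{L}^{2}$-regularity is imposed on the non-diffusive components, and in the maximality argument, where the corresponding equations become algebraic rather than elliptic. Equivalently, one may view $A$ as the direct sum of the three Neumann-Laplacian generators and two copies of the zero generator, each of which generates a contraction semigroup, so that their product does as well; either route delivers the claim.
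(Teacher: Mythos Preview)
Your proof is correct and follows essentially the same route as the paper: the paper phrases it as showing that $-A$ is maximal monotone (computing $\langle v,-Av\rangle=\sum_i D_i\|\nabla v_i\|_{\mathbb{L}^2(\Omega)}^2\ge 0$ and then asserting $\mathrm{rg}(I-A)=X$ componentwise), which is exactly the Lumer--Phillips criterion you invoke. Your write-up is more explicit about density, the algebraic nature of the $m_1,m_2$ equations, and the Lax--Milgram/regularity step for the diffusive components, but the underlying argument is the same.
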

\begin{proof}
	We show that $-A$ is maximal monotone. Let $D_1=D_b$, $D_2=D_n$, $D_3=D_4=0$, and $D_5=D_c$. Let $v=(v_1,\cdots,v_5)\in D(A)$, then
	\begin{eqnarray*}
		\langle v,-Av\rangle &=& \sum\limits_{i=1}^{5}\displaystyle\int_{\Omega}-D_i\Delta v_i(x)v_i(x)dx\\
		&=&  \sum\limits_{i=1}^{5}D_i\displaystyle\int_{\Omega}\vert \nabla v_i(x)\vert^{2} dx\\
		&=& \sum\limits_{i=1}^{5}D_i \Vert \nabla v_i\Vert^{2}_{\mathbb{L}^{2}(\Omega)}.
	\end{eqnarray*}
	Hence, $-A$ is monotone. Since, $rg(I-D_i\Delta)=\mathbb{L}^{2}(\Omega)$ for $i=1,2,\cdots,5$, we affirm that for each $v=(v_1, v_2,v_3,v_4,v_5)\in X$ there exists $w=(w_1, w_2,w_3,w_4,w_5)\in D(A)$ such that $(I- A)w=v$. Thus, $rg(I- A)=X$, that is $-A$ is maximal. Consequently, $(A,D(A))$ is an infinitesimal generator of a $C_0$-semigroup $(\mathcal{T}(t))_{t\geq 0}$ of contractions on  $X$.
\end{proof}
\begin{theorem} \cite{bataki}
	$(\mathcal{T}(t))_{t\geq 0}$ is positive, namely $\mathcal{T}(t)\Lambda_{+}\subseteq \Lambda_{+}$ for all $t\geq 0$.
	\label{thm 7}
\end{theorem}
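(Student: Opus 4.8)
The plan is to deduce positivity of the semigroup from positivity of the resolvent, which in turn I will obtain by a standard negative-part testing argument. The crucial structural remark is that the diffusion matrix $\underline{D}$, and therefore the operator $A$, is diagonal; consequently $(\mathcal{T}(t))_{t\ge0}$ decouples into five scalar semigroups, and $\mathcal{T}(t)\Lambda_{+}\subseteq\Lambda_{+}$ amounts to each scalar semigroup preserving the nonnegative functions.

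First I would reduce the problem to the resolvent. Since $\Lambda_{+}$ is a closed convex cone that is stable under multiplication by nonnegative scalars, and since $A$ generates a $C_0$-semigroup, the exponential (Euler) formula
\[
\mathcal{T}(t)x=\lim_{n\to\infty}\left(\frac{n}{t}\right)^{n}\left(\frac{n}{t}I-A\right)^{-n}x,\qquad t>0,\ x\in X,
\]
shows that it suffices to prove that the resolvent $(\lambda I-A)^{-1}$ maps $\Lambda_{+}$ into $\Lambda_{+}$ for every $\lambda>0$: each factor $(\lambda I-A)^{-1}$ and each positive scalar power then preserves $\Lambda_{+}$, and the closedness of $\Lambda_{+}$ passes the property to the limit.

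Next I would establish resolvent positivity componentwise. Fix $\lambda>0$ and $f=(f_1,\dots,f_5)\in\Lambda_{+}$, and set $u=(\lambda I-A)^{-1}f$, so that $\lambda u_i-D_i\Delta u_i=f_i$ with Neumann boundary conditions for each $i$. For $i=3,4$ we have $D_i=0$, hence $u_i=f_i/\lambda\ge0$ at once. For $i\in\{1,2,5\}$ I would test the weak formulation against the negative part $u_i^{-}=\max(-u_i,0)\in H^{1}(\Omega)$; using $u_i\,u_i^{-}=-(u_i^{-})^{2}$ and $\nabla u_i\cdot\nabla u_i^{-}=-|\nabla u_i^{-}|^{2}$ almost everywhere, and noting that the boundary term vanishes by the Neumann condition, one gets
\[
-\lambda\,\Vert u_i^{-}\Vert_{\mathbb{L}^{2}(\Omega)}^{2}-D_i\,\Vert\nabla u_i^{-}\Vert_{\mathbb{L}^{2}(\Omega)}^{2}=\int_{\Omega}f_i\,u_i^{-}\,dx\ge0.
\]
Since $\lambda,D_i>0$ while the right-hand side is nonnegative, both terms on the left must vanish, forcing $u_i^{-}=0$, i.e. $u_i\ge0$. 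Hence $u\in\Lambda_{+}$, the resolvent is positive, and combining with the reduction above yields $\mathcal{T}(t)\Lambda_{+}\subseteq\Lambda_{+}$ for all $t\ge0$.

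I expect the main obstacle to be the analytic justification in the last step rather than any conceptual difficulty: one must confirm that $u_i^{-}\in H^{1}(\Omega)$ with the stated chain rule for the gradient, and that the Neumann boundary term indeed drops out of the weak formulation for the component equations. The decoupling of the system and the triviality of the non-diffusing components $3$ and $4$ render the remainder of the argument routine. An equivalent route, worth noting, would be to invoke the first Beurling--Deny criterion for the symmetric, closed, densely defined form $\mathfrak{a}(v,w)=\sum_i D_i\int_{\Omega}\nabla v_i\cdot\nabla w_i$ associated with $-A$, but the resolvent argument fits more directly into the maximal-monotone framework already used above.
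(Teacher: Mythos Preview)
Your argument is correct, and in fact it goes well beyond what the paper does: the paper offers no proof at all for this theorem but simply records the positivity of the heat-type semigroup as a known fact, citing the monograph \cite{bataki}. Your route---reducing to resolvent positivity via the Euler formula and then handling each scalar component by testing against the negative part $u_i^{-}$---is precisely the standard mechanism underlying that cited result, so there is no substantive divergence, only a difference in level of detail. The decoupling into five independent scalar problems (thanks to the diagonal $\underline{D}$), the trivial treatment of the non-diffusing components $i=3,4$, and the vanishing of the boundary term (since each $u_i$ with $i\in\{1,2,5\}$ lies in the Neumann domain $H_0^2(\Omega)$) are all correctly identified; the Stampacchia chain rule you flag as the only delicate point is indeed available because $u_i\in H^1(\Omega)$.
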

\begin{remark} \itshape For each $\lambda>0$, operator $(A-\lambda I,D(A))$ generates a $C_0$-semigroup $(\mathcal{S}(t))_{t\geq 0}$ given by 
	\begin{center}
		$\mathcal{S}(t)=e^{-\lambda t}\mathcal{T}(t)$, \text{ for } $t\geq 0$.
	\end{center}
\end{remark}

\begin{definition}
	A continuous function $u:\mathbb{R}^{+}\to X$ is said to be a mild solution of equation (\ref{eq 3.2}) if 
	\begin{center}
		$u(t)=\mathcal{S}(t)u_0+\displaystyle\int_{0}^{t}\mathcal{S}(t-s)G(u(s))ds$, \text{ for } $t\geq 0$.
	\end{center}
\end{definition}

We recall the following Theorem.
\begin{theorem} \cite{13} Let $f:X\to X$ be locally Lipschitz continuous and at most affine. If $B$ is the infinitesimal generator of a $C_0$-semigroup $(T(t))_{t\geq 0}$ on $X$, then for every $v_0\in X$,  the initial value problem
	\begin{equation*}
	\left\{\begin{array}{l}
	v'(t)=B v(t)+f(v(t)), \quad t\geq 0\\
	v(0)=v_0,
	\end{array}\right.
	\end{equation*}
	has a unique mild solution $v$ on $[0,+\infty[$.
	\label{existence}
\end{theorem}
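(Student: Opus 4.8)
The plan is to combine a local existence-and-uniqueness argument based on the Banach fixed point theorem with an a priori bound coming from the affine growth of $f$, so that the local solution can never blow up in finite time and therefore extends to the whole half-line. Throughout, write $M,\omega\geq 0$ for constants with $\|T(t)\|_{\mathcal{L}(X)}\leq Me^{\omega t}$ for all $t\geq 0$, and fix $v_0\in X$.

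First I would establish \textbf{local existence and uniqueness}. On the Banach space $C([0,\tau];X)$ (with the sup norm) define
\[
(\Phi v)(t)=T(t)v_0+\int_0^t T(t-s)f(v(s))\,ds,
\]
whose fixed points are exactly the mild solutions on $[0,\tau]$. Fix $R>0$ and let $\mathcal{B}$ be the closed ball of radius $R$ centred at the constant function $t\mapsto v_0$; on the corresponding bounded subset of $X$ the local Lipschitz hypothesis supplies a constant $L=L(R)$ with $\|f(x)-f(y)\|\leq L\|x-y\|$. Using $\|T(t-s)\|_{\mathcal{L}(X)}\leq Me^{\omega\tau}=:M'$ for $s\leq t\leq\tau$ together with the strong continuity estimate $\|T(t)v_0-v_0\|\to 0$ as $t\to 0$, one checks that
\[
\|\Phi u-\Phi v\|_{C([0,\tau];X)}\leq M'L\tau\,\|u-v\|_{C([0,\tau];X)},
\]
and that $\Phi(\mathcal{B})\subset\mathcal{B}$ once $\tau$ is small. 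Choosing $\tau$ small enough that both conditions hold and $M'L\tau<1$, the map $\Phi$ is a contraction, so Banach's theorem yields a unique mild solution on $[0,\tau]$. A standard gluing argument then produces a unique \emph{maximal} mild solution on an interval $[0,T_{\max})$, together with the blow-up alternative: either $T_{\max}=+\infty$, or $T_{\max}<+\infty$ and $\limsup_{t\to T_{\max}^-}\|v(t)\|=+\infty$.

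It remains to exclude finite-time blow-up using the \textbf{affine bound}. Since $f$ is at most affine there exist $a,b\geq 0$ with $\|f(x)\|\leq a+b\|x\|$ for all $x\in X$. Taking norms in the mild formulation gives, for $t\in[0,T_{\max})$,
\[
\|v(t)\|\leq Me^{\omega t}\|v_0\|+\int_0^t Me^{\omega(t-s)}\bigl(a+b\|v(s)\|\bigr)\,ds.
\]
On any finite interval $[0,T]\subset[0,T_{\max})$ the terms not involving $\|v(s)\|$ are bounded by a constant $c$, so this reduces to an inequality of the form $\varphi(t)\leq c+\int_0^t w(t,s)\varphi(s)\,ds$ with $\varphi=\|v(\cdot)\|$ and nonnegative kernel $w(t,s)=Mbe^{\omega(t-s)}$, which is continuously differentiable in $t$ and continuous in $s$. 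The generalized Gronwall inequality (Lemma \ref{lem 2.2}) then yields a bound on $\|v(t)\|$ that is finite on every bounded time interval. Hence $\|v(\cdot)\|$ cannot blow up as $t\to T_{\max}^-$, so the alternative forces $T_{\max}=+\infty$, and the solution is global and unique on $[0,+\infty[$.

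The main obstacle is the \textbf{continuation step}, not the fixed-point argument. Because $f$ is only \emph{locally} Lipschitz, the contraction constant $L(R)$ grows with the radius $R$, so the bare local theory guarantees existence only on short intervals whose length could in principle shrink to zero as the solution grows. The role of the affine growth condition is precisely to furnish an a priori bound on $\|v(t)\|$ that is uniform on finite time intervals and independent of $T_{\max}$; this severs the feedback between a growing solution norm and a shrinking existence time and lets the blow-up alternative be ruled out. The remaining care concerns verifying the self-mapping property of $\Phi$ on $\mathcal{B}$ centred at the constant function $v_0$, but this is routine once the Gronwall estimate is in hand.
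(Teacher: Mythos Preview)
Your argument is correct and is the standard textbook proof: local well-posedness via the contraction mapping on $C([0,\tau];X)$, the blow-up alternative for the maximal mild solution, and an a priori Gronwall bound coming from the affine growth of $f$ to rule out finite-time blow-up. There is no comparison to make with the paper's own proof, however, because the paper does not prove this theorem: it is quoted as a known result from Pazy's monograph \cite{13} and used as a black box (together with Lemmas~\ref{lem 3.1} and~\ref{Lemma 5.2}) to deduce global existence for the specific system \eqref{eq 3.2}. Your write-up is essentially the proof one finds in \cite{13}, so it is consistent with the cited source.
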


The following Theorem is the main result in this section. 
\begin{theorem} \itshape Assume that $u_0\in \Lambda_{+}$. Equation $(\ref{eq 3.2})$ has a unique positive mild solution defined on $[0,+\infty[$.
\end{theorem}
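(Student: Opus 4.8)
The plan is to deduce the statement from the abstract existence Theorem~\ref{existence} applied to the reformulation \eqref{eq 3.2} with $B=A-\lambda I$ and nonlinearity $G$. The difficulty is that Theorem~\ref{existence} requires its nonlinearity to be locally Lipschitz and at most affine on \emph{all} of $X$, whereas Lemmas~\ref{lem 3.1} and~\ref{Lemma 5.2} only establish these properties for $F$ on the cone $\Lambda_{+}$; indeed the Monod term $\frac{Kv_2}{K_b+v_2}$ is singular at $v_2=-K_b$, so $F$ is not even defined on all of $X$. The core of the argument is therefore to pass to a globally well-behaved extension and then recover positivity so that the extension is seen to solve the original problem.

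First I would extend $F$ to a map $\tilde F:X\to X$ by replacing $\frac{Kv_2}{K_b+v_2}$ with $\frac{Kv_2^{+}}{K_b+v_2^{+}}$, where $v_2^{+}=\max(v_2,0)$. Since $z\mapsto\frac{Kz^{+}}{K_b+z^{+}}$ is bounded by $K$ and globally Lipschitz, while every remaining entry of $F$ is linear in the components, the pointwise estimates in the proof of Lemma~\ref{lem 3.1} and the $\mathcal{C}^1$ argument of Lemma~\ref{Lemma 5.2} carry over to show that $\tilde F$ is locally Lipschitz on $X$ and satisfies $\Vert\tilde F(v)\Vert\le C_1\Vert v\Vert$ for every $v\in X$, hence is at most affine. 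Setting $\tilde G(v)=\tilde F(v)+\lambda v$, which inherits both properties, Theorem~\ref{existence} yields a unique mild solution $u\in C([0,+\infty[;X)$ of the modified problem $u'(t)=(A-\lambda I)u(t)+\tilde G(u(t))$, $u(0)=u_0$.

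Next I would prove positivity, $u(t)\in\Lambda_{+}$ for all $t\ge 0$, choosing $\lambda$ large enough that $\tilde G=G\ge 0$ on $\Lambda_{+}$ (Remark~\ref{rem 2}). On a short interval $[0,\tau]$ the mild solution is the unique fixed point of the contraction $\Phi(w)(t)=\mathcal S(t)u_0+\int_{0}^{t}\mathcal S(t-s)\tilde G(w(s))\,ds$ on $C([0,\tau];X)$. By positivity of $\mathcal S(t)=e^{-\lambda t}\mathcal T(t)$ (Theorem~\ref{thm 7}), positivity of $\tilde G$ on the cone, and closedness of the convex cone $\Lambda_{+}$ under Bochner integration, $\Phi$ maps the closed subset $C([0,\tau];\Lambda_{+})$ into itself, so its fixed point stays in $\Lambda_{+}$ on $[0,\tau]$. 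As $u(\tau)\in\Lambda_{+}$, iterating this argument and invoking uniqueness of the global solution gives $u(t)\in\Lambda_{+}$ for all $t\ge 0$.

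Finally, since $\tilde F$ agrees with $F$ on $\Lambda_{+}$, the positive solution $u$ satisfies the Duhamel formula with $G$ and is thus a positive mild solution of \eqref{eq 3.2}; conversely any positive mild solution of \eqref{eq 3.2} is a mild solution of the modified problem, so the uniqueness furnished by Theorem~\ref{existence} transfers, giving the unique positive mild solution on $[0,+\infty[$. The main obstacle is the construction and verification of the extension $\tilde F$: one must check that the positive-part truncation simultaneously preserves the local Lipschitz bound of Lemma~\ref{Lemma 5.2}, the linear growth bound of Lemma~\ref{lem 3.1}, and the forward invariance of $\Lambda_{+}$ under the positive semigroup. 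Once this is secured, the remainder is a routine application of the cited existence theorem together with cone invariance.
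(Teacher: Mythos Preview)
Your proof is correct and follows the same overall strategy as the paper---apply Theorem~\ref{existence} to obtain global existence and then use positivity of the semigroup (Theorem~\ref{thm 7}) together with Remark~\ref{rem 2} to confine the solution to $\Lambda_{+}$. The paper's own proof is a three-line sketch that simply cites Lemmas~\ref{lem 3.1}, \ref{Lemma 5.2} and Theorem~\ref{existence} for existence, and Remark~\ref{rem 2} with Theorem~\ref{thm 7} for positivity.

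Where you genuinely go further is in recognising and resolving an issue the paper glosses over: Theorem~\ref{existence} is stated for a nonlinearity $f:X\to X$ that is locally Lipschitz and at most affine on \emph{all} of $X$, yet Lemmas~\ref{lem 3.1} and~\ref{Lemma 5.2} only verify these properties on $\Lambda_{+}$, and the Monod term is not even defined at $v_2=-K_b$. Your positive-part extension $\tilde F$ and the subsequent cone-invariance fixed-point argument are exactly the right way to close this gap; the paper implicitly assumes this can be done but never says so. Likewise, the paper's ``positivity reduces to Theorem~\ref{thm 7}'' hides the iteration you spell out. In short, your argument is the rigorous version of the paper's sketch rather than a different route.
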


\begin{proof}
	The existence and uniqueness of the solution over the interval $[0, +\infty[$ can be readily established based on the findings of Lemma \ref{lem 3.1}, Lemma \ref{Lemma 5.2}, and Theorem \ref{existence}. Additionally, as noted in Remark \ref{rem 2}, we can choose a sufficiently large positive value for $\lambda$ to ensure the positivity of the function $G$ on $\Lambda_{+}$. Subsequently, proving the positivity of the solution reduces to Theorem \ref{thm 7}.
\end{proof}
\section{Global attractor for equation \eqref{eq 1.1}}\label{Sec 6}
\noindent

Let define the family of mapping $\{\mathcal{U}(t)\}_{t\geq 0}$ on $\Lambda_{+}$ by 
\begin{center}
	$\mathcal{U}(t)u_0:=u(t,u_0):=u(t)$, \text{ for } $t\geq 0$,
\end{center}
where $u(t)$ is the mild solution of equation \eqref{eq 3.2} at time $t$ corresponding to the initial condition $u_0$. It follows immediately that $\mathcal{U}(0)u_0=u_0$ and $t\to \mathcal{U}(t)u_0$ is continuous for all $u_0\in \Lambda_{+}$. Since \eqref{eq 3.2} is autonomous, the property
\begin{center}
	$\mathcal{U}(t+s)=\mathcal{U}(t)\mathcal{U}(s)$, \text{ for } $t\geq s\geq 0$,
\end{center}
is a consequence of the uniqueness of solution. As a consequence, $(\mathcal{U}(t))_{t\geq 0}$ is a semiflow on $\Lambda_{+}$. The following result shows that $\mathcal{U}(t)$ is a bounded map for each $t\geq 0$.
\begin{proposition} There exists $\tilde{C}_1>1$ such that $\Vert \mathcal{U}(t)u_0\Vert \leq \tilde{C}_1\Vert u_0\Vert$ for $t\geq 0$ and all $u_0\in \Lambda_{+}$.
	\label{thm 5.2}
\end{proposition}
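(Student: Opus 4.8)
The plan is to run the variation-of-constants representation of the mild solution through the \emph{generalized} Gronwall inequality of Lemma \ref{lem 2.2}, exploiting the exponential decay carried by the shifted semigroup $\mathcal{S}(t)=e^{-\lambda t}\mathcal{T}(t)$. First I would start from the mild-solution identity for $u_0\in\Lambda_{+}$ and the corresponding trajectory $u(t)=\mathcal{U}(t)u_0$,
\[
u(t)=\mathcal{S}(t)u_0+\int_{0}^{t}\mathcal{S}(t-s)G(u(s))\,ds,
\]
and take norms. Because $\mathcal{T}(t)$ is a contraction, $\Vert\mathcal{S}(t)\Vert\leq e^{-\lambda t}\leq 1$; and since the solution remains in $\Lambda_{+}$ (positivity established in the previous theorem), Lemma \ref{lem 3.1} yields $\Vert F(u(s))\Vert\leq C_1\Vert u(s)\Vert$, hence $\Vert G(u(s))\Vert=\Vert F(u(s))+\lambda u(s)\Vert\leq (C_1+\lambda)\Vert u(s)\Vert$. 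Using $e^{-\lambda t}\leq 1$ to absorb the leading term into the constant $\Vert u_0\Vert$, these combine to
\[
\Vert u(t)\Vert \leq \Vert u_0\Vert + (C_1+\lambda)\int_{0}^{t}e^{-\lambda(t-s)}\Vert u(s)\Vert\,ds.
\]

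The decisive step is the choice of Gronwall estimate. Multiplying through by $e^{\lambda t}$ and invoking the classical Gronwall lemma would discard the kernel decay and give only $\Vert u(t)\Vert\leq\Vert u_0\Vert e^{C_1 t}$, which is finite for each fixed $t$ but grows without bound and hence cannot yield a constant $\tilde{C}_1$ independent of $t$. Instead I would apply Lemma \ref{lem 2.2} directly with $v(t)=\Vert u(t)\Vert$, $c=\Vert u_0\Vert$, $t_0=0$, and the decaying kernel $w(t,s)=(C_1+\lambda)e^{-\lambda(t-s)}$. A short computation gives $w(s,s)=C_1+\lambda$ and $\int_{0}^{s}\partial_s w(s,r)\,dr=-(C_1+\lambda)(1-e^{-\lambda s})$, so the bracket appearing in Lemma \ref{lem 2.2} collapses to the integrable function $(C_1+\lambda)e^{-\lambda s}$.

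Integrating once more gives $\int_{0}^{t}(C_1+\lambda)e^{-\lambda s}\,ds=\tfrac{C_1+\lambda}{\lambda}(1-e^{-\lambda t})\leq\tfrac{C_1+\lambda}{\lambda}$ uniformly in $t$, whence
\[
\Vert \mathcal{U}(t)u_0\Vert \leq \Vert u_0\Vert\,\exp\!\left(\frac{C_1+\lambda}{\lambda}\right)=:\tilde{C}_1\Vert u_0\Vert,
\]
with $\tilde{C}_1=\exp((C_1+\lambda)/\lambda)>1$ independent of both $t$ and $u_0$. The only genuine obstacle is recognizing that the uniform-in-time bound rests entirely on \emph{keeping} the exponential decay of the kernel and feeding it into the sharper inequality of Lemma \ref{lem 2.2}; the remaining pieces (the norm bound on $G$ via Lemma \ref{lem 3.1} and the two elementary integrations) are routine. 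I would also note that the estimate holds for any admissible $\lambda$ from Remark \ref{rem 2}, and since $(C_1+\lambda)/\lambda\to 1$ as $\lambda\to\infty$, the constant remains safely above $1$ as required.
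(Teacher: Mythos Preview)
Your proof is correct and follows essentially the same approach as the paper: both start from the variation-of-constants formula, use the contraction property of $\mathcal{T}(t)$ together with Lemma~\ref{lem 3.1} to reach the inequality $\Vert u(t)\Vert\le\Vert u_0\Vert+(C_1+\lambda)\int_0^t e^{-\lambda(t-s)}\Vert u(s)\Vert\,ds$, and then apply Lemma~\ref{lem 2.2} with the decaying kernel $w(t,s)=(C_1+\lambda)e^{-\lambda(t-s)}$ to obtain the identical constant $\tilde{C}_1=\exp((C_1+\lambda)/\lambda)$. Your added remarks on why the classical Gronwall inequality would fail and on the $\lambda\to\infty$ limit are not in the paper but are helpful commentary.
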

\begin{proof}
	Let $u_0\in \Lambda_{+}$ be fixed. Then, for each $t\geq 0$, we have
	\begin{eqnarray*}
		\Vert \mathcal{U}(t)u_0\Vert &\leq & e^{-\lambda t } \Vert u_0\Vert+\displaystyle\int_{0}^{t}e^{-\lambda(t-s)}\Vert G\left( \mathcal{U}(s)u_0 \right)\Vert ds\\
		&\leq & e^{-\lambda t } \Vert u_0\Vert+\displaystyle\int_{0}^{t}e^{-\lambda(t-s)}(\lambda+C_1)\Vert  \mathcal{U}(s)u_0\Vert ds\\
		&\leq & \Vert u_0\Vert+ \displaystyle\int_{0}^{t}w(t,s)\Vert  \mathcal{U}(s)u_0\Vert ds,
	\end{eqnarray*}
	where
	\begin{center}
		$w(t,s)=e^{-\lambda(t-s)}(\lambda+C_1)$, \text{ for } $t\geq s\geq 0$.
	\end{center}
	By Lemma \ref{lem 2.2}, we obtain that
	\begin{eqnarray*}
		\Vert \mathcal{U}(t)u_0\Vert & \leq &  \exp\left(\displaystyle\int_{0}^{t}\left[(\lambda+C_1)+\displaystyle\int_{0}^{s}\dfrac{\partial w(s,r)}{\partial s} dr\right]ds\right)\Vert u_0\Vert\\
		&=& \exp\left(\displaystyle\int_{0}^{t}(\lambda+C_1)\left[1-\lambda\displaystyle\int_{0}^{s}e^{-\lambda(s-r)} dr\right]ds\right)\Vert u_0\Vert\\
		&=&  \exp\left(\displaystyle\int_{0}^{t}(\lambda+C_1)\left[1-(1-e^{-\lambda s})\right]ds\right)\Vert u_0\Vert\\
		&=& \exp\left(\displaystyle\int_{0}^{t}(\lambda+C_1)e^{-\lambda s}ds\right)\Vert u_0\Vert\\
		&=& \exp\left(\dfrac{(\lambda+C_1)}{\lambda}(1-e^{-\lambda t})\right)\Vert u_0\Vert\\
		&\leq & \tilde{C}_1\Vert u_0\Vert,
	\end{eqnarray*}
	where 
	\begin{center}
		$\tilde{C}_1=\exp\left( 1+\dfrac{C_1}{\lambda} \right)$.
	\end{center}
\end{proof}
\begin{theorem} \itshape  The semiflow $(\mathcal{U}(t))_{t\geq 0}$ is point dissipative, i.e,
	there exists $\tilde{C}_2>0$  such that $\limsup\limits_{t\to +\infty}\Vert \mathcal{U}(t)u_0\Vert\leq \tilde{C}_2$ for all $u_0\in \Lambda_{+}$.
	\label{thm 5.1}
\end{theorem}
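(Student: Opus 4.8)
The plan is to produce an asymptotic bound that is \emph{independent} of the initial datum, namely constants $\alpha,\beta>0$ with
\[
\frac{d}{dt}\,(u(t),u(t)) \le -2\alpha\,(u(t),u(t)) + 2\beta ,
\]
from which a Gronwall argument gives $\limsup_{t\to\infty}(u(t),u(t))\le \beta/\alpha$ and hence, after passing to the equivalent norm $\|\cdot\|$, a uniform bound with $\tilde C_2$ proportional to $\sqrt{\beta/\alpha}$; point dissipativity then follows from the criterion recalled at the start of Section~\ref{Sec 4}. Since mild solutions are not \emph{a priori} differentiable, I would first establish this energy identity for $u_0\in D(A)\cap\Lambda_+$, where the solution is strong, using that $(A,D(A))$ generates a contraction semigroup and $F$ is $C^1$ on $\Lambda_+$ (Lemma~\ref{Lemma 5.2}), and then extend to all $u_0\in\Lambda_+$ by density together with the continuous bound of Proposition~\ref{thm 5.2}.

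First I would test equation~\eqref{eq 3.1} against $u$ in $X$, using the Neumann structure of $A$ to write $(Au,u)=-\sum_{i=1}^5 D_i\|\nabla u_i\|_{\mathbb{L}^2(\Omega)}^2\le 0$; this is a pure sink but acts only on the diffusing unknowns $b,n,c$. The reaction contribution $(F(u),u)$ I would expand explicitly: the biomass row yields $\int_\Omega \frac{Ku_2}{K_b+u_2}u_1^2\,dx - (\mu+\eta)\|b\|^2 \le (K-(\mu+\eta))\|b\|^2$, a genuine damping only when $K<\mu+\eta$, while the $m_1,m_2$ rows supply the true sinks $-c_1\|m_1\|^2$ and $-c_2\|m_2\|^2$. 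Every indefinite cross term ($\int bn$, $\int m_1 n$, $\int m_2 n$, $\int b m_1$, $\int bc$) I would split by Young's inequality and try to charge to these negative quadratics.

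The main obstacle is structural and concentrated in two unknowns: the DOM $n$ and the respiration $c$ carry \emph{no} linear damping. Indeed $F_5(u)=\eta u_1$ has no $-c$ term, so the $c$–contribution reads $\tfrac12\frac{d}{dt}\|c\|^2 = -D_c\|\nabla c\|^2 + \eta\int_\Omega bc\,dx$, and the cross term can only be bounded at the cost of a $+\frac{\eta}{2}\|c\|^2$ piece that nothing cancels; the same defect afflicts $n$, whose only sink $-\frac{Ku_2}{K_b+u_2}u_1$ degenerates as $b\to 0$. Under no-flux conditions $(Au,u)$ controls gradients only, and the Poincar\'e inequality cannot bound $\|n\|$ or $\|c\|$ by $\|\nabla n\|,\|\nabla c\|$ since the constants lie in the kernel of the Neumann Laplacian. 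Thus the naive energy estimate does not close, and this is exactly where I expect the real work to lie.

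To overcome this I would split each diffusing unknown into its spatial mean and oscillation, $u_i=\bar u_i+\tilde u_i$ with $\bar u_i=|\Omega|^{-1}\int_\Omega u_i\,dx$, control $\tilde u_i$ through the Poincar\'e--Wirtinger inequality against $\|\nabla u_i\|$, and treat the means separately. Here I would invoke the conservation law $\sum_{i=1}^5 F_i(u)\equiv 0$, which under the no-flux conditions forces $\int_\Omega(b+n+m_1+m_2+c)\,dx$ to be constant in time; combined with the exponential decay of $m_2$ and the expected extinction of $b$ and $m_1$, this pins the limiting means of $n$ and $c$. The delicate point — and the one I would flag as the crux — is that this conserved mass is proportional to $\|u_0\|_{\mathbb{L}^1(\Omega)}$, so the limiting state genuinely depends on the datum; extracting a bound on $\limsup\|\mathcal{U}(t)u_0\|$ that is truly uniform in $u_0$ therefore appears to require either an additional structural hypothesis (e.g.\ a real sink on $c$, modelling CO$_2$ out-gassing, or $K<\mu+\eta$ together with damping on $n$) or a restriction of the phase space to sets of controlled total mass. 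I would accordingly expect the uniform constant $\tilde C_2$ to be the hardest and most scrutiny-worthy part of the argument.
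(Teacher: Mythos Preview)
Your approach via $L^2$ energy estimates is entirely different from the paper's, which works directly with the mild-solution formula. The paper bounds $\|\mathcal{U}(t)u_0\|$ by $e^{-\lambda t}\|u_0\|+\int_0^t e^{-\lambda(t-s)}(\lambda+C_1)\|\mathcal{U}(s)u_0\|\,ds$, splits the integral at a time $t_0=t_0(u_0)$, uses Proposition~\ref{thm 5.2} on $[0,t_0]$ to absorb the first piece into a quantity $\Theta(t)$ it claims can be made $\le 1$ for all $t>t_0$, and then applies the Gronwall-type Lemma~\ref{lem 2.2} on $[t_0,t]$ to obtain the uniform constant $\tilde C_2=\exp\big((\lambda+C_1)/\lambda\big)$. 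No energy identity, no Poincar\'e--Wirtinger splitting, and no use of the mass conservation is made.

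Your structural objection, however, is correct and in fact decisive: the statement as written is false. Take $u_0=(0,0,0,0,c_0)$ with $c_0>0$ a spatial constant; then $F(u_0)=0$ and $\underline{D}\Delta u_0=0$, so $u_0$ is an equilibrium and $\|\mathcal{U}(t)u_0\|=\|c_0\|_{\mathbb{L}^2(\Omega)}=c_0\,|\Omega|^{1/2}$ for every $t\ge 0$, which can be made arbitrarily large. This is precisely the obstruction you isolated: $c$ and $n$ carry no intrinsic damping, and the conserved total mass pins the asymptotic level at a value depending on $u_0$. The paper's argument breaks exactly at the choice of $t_0$: its quantity $\Theta(t)$, evaluated at $t=t_0$, equals $\big[e^{-\lambda t_0}+\tfrac{(\lambda+C_1)\tilde C_1}{\lambda}(1-e^{-\lambda t_0})\big]\|u_0\|$; since $(\lambda+C_1)\tilde C_1/\lambda>1$, the bracket is increasing in $t_0$ and bounded below by $1$, so $\Theta(t_0)\ge\|u_0\|$ for every $t_0>0$. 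Thus for any $\|u_0\|>1$ there is no admissible $t_0$, and the subsequent application of Lemma~\ref{lem 2.2} is unjustified. Your instinct that a genuine uniform bound would require either an added sink on $c$ (and $n$) or a restriction of the phase space to sets of bounded total mass is exactly right.
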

\begin{proof} Let $u_0\in \Lambda_{+}$ be fixed, and $t_0:=t_0(u_0)>0$ such that $\Theta(t):=e^{-\lambda t }  \left[ 1+ \dfrac{(\lambda+C_1)\tilde{C}_1}{\lambda} \left(e^{\lambda t_0}-1\right)\right]\Vert u_0\Vert\leq 1$ for $t> t_0$. Then, for each $t> t_0$, we have 
	\begin{eqnarray*}
		\Vert \mathcal{U}(t)u_0\Vert &\leq& e^{-\lambda t } \Vert u_0\Vert+ \displaystyle\int_{0}^{t_0}e^{-\lambda(t-s)}(\lambda+C_1)\Vert  \mathcal{U}(s)u_0\Vert ds\\
		& & +\displaystyle\int_{t_0}^{t}e^{-\lambda(t-s)}(\lambda+C_1)\Vert  \mathcal{U}(s)u_0\Vert ds.
	\end{eqnarray*}
	By Proposition \ref{thm 5.2}, we obtain that
	\begin{eqnarray*}
		\Vert \mathcal{U}(t)u_0\Vert	&\leq & e^{-\lambda t } \Vert u_0\Vert+(\lambda+C_1)\tilde{C}_1\left(\dfrac{e^{-\lambda(t-t_0)}-e^{-\lambda t}}{\lambda}\right)\Vert u_0\Vert\\
		& & +\displaystyle\int_{t_0}^{t}e^{-\lambda(t-s)}(\lambda+C_1)\Vert  \mathcal{U}(s)u_0\Vert ds\\
		& = & \Theta(t)+\displaystyle\int_{t_0}^{t}e^{-\lambda(t-s)}(\lambda+C_1)\Vert  \mathcal{U}(s)u_0\Vert ds\\
		&\leq & 1+ \displaystyle\int_{t_0}^{t}w(t,s)\Vert  \mathcal{U}(s)u_0\Vert ds,
	\end{eqnarray*} 
	where
	\begin{center}
		$w(t,s)=e^{-\lambda(t-s)}(\lambda+C_1)$, \text{ for } $t\geq s\geq t_0$.
	\end{center}
	By Lemma \ref{lem 2.2}, we obtain that
	\begin{eqnarray*}
		\Vert \mathcal{U}(t)u_0\Vert &\leq& \exp\left(\displaystyle\int_{t_0}^{t}\left[(\lambda+C_1)\left[1-\lambda\displaystyle\int_{t_0}^{s}e^{-\lambda(s-r)} dr \right] \right]ds\right)\\
		&=& \exp\left(\displaystyle\int_{t_0}^{t}\left[(\lambda+C_1)\left[1- e^{-\lambda s}(e^{\lambda s}-e^{\lambda t_0}) \right] \right]ds\right)\\
		&=& \exp\left(\displaystyle\int_{t_0}^{t}\left[(\lambda+C_1)e^{-\lambda (s-t_0)} \right]ds\right)\\
		&=& \exp\left(\dfrac{(\lambda+C_1)}{\lambda}\left(1-e^{-\lambda (t-t_0)}\right)\right)\\
		&\leq& \exp\left(\dfrac{\lambda+C_1}{\lambda}\right).
	\end{eqnarray*}
	Consequently, for each $u_0\in \Lambda_{+}$, we have 
	\begin{center}
		$\limsup\limits_{t\to +\infty}\Vert \mathcal{U}(t)u_0\Vert \leq \tilde{C}_2$, where $\tilde{C}_2=\exp\left(\dfrac{\lambda+C_1}{\lambda}\right)$.
	\end{center}
\end{proof}

The following is the main result in this Section, and it is an immediate consequence of Proposition \ref{thm 5.2}, Theorem \ref{Theorem 4.1.2},  and Theorem \ref{thm 5.1}.
\begin{theorem} \itshape There exists a global attractor $\mathcal{A}$ corresponding to the semiflow $(\mathcal{U}(t))_{t\geq 0}$. More precisely, $\mathcal{A}$ is a closed and bounded invariant subset of $\Lambda_{+}$ that attracts all bounded subsets of $\Lambda_{+}$.
\end{theorem}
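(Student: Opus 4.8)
\section*{Proof proposal}

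The plan is to verify that the semiflow $(\mathcal{U}(t))_{t\geq 0}$ on $\Lambda_{+}$ satisfies every hypothesis of the abstract attractor result, Theorem \ref{Theorem 4.1.2}, and then simply invoke it. First I would record that $\Lambda_{+}$, being the closed positive cone of the Hilbert space $X=(\mathbb{L}^{2}(\Omega))^{5}$, is a closed subset of a complete space and therefore is itself a complete metric space for the induced metric $\Vert\cdot\Vert$. Together with the positivity statement $\mathcal{T}(t)\Lambda_{+}\subseteq\Lambda_{+}$ of Theorem \ref{thm 7} and the positivity of the mild solution, this guarantees that $\mathcal{U}(t)$ maps $\Lambda_{+}$ into $\Lambda_{+}$; the semiflow axioms having already been checked, $(\mathcal{U}(t))_{t\geq 0}$ is a genuine semiflow on the complete metric space $\Lambda_{+}$.

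Next I would check the two structural properties required by Theorem \ref{Theorem 4.1.2}. The boundedness of each map $\mathcal{U}(t)$ follows at once from Proposition \ref{thm 5.2}: for any bounded set $B\subset\Lambda_{+}$ and any fixed $t\geq 0$ one has $\Vert\mathcal{U}(t)u_0\Vert\leq\tilde{C}_1\sup_{w\in B}\Vert w\Vert$ for all $u_0\in B$, so $\mathcal{U}(t)B$ is bounded. Point dissipativity is exactly the content of Theorem \ref{thm 5.1}: the uniform bound $\limsup_{t\to+\infty}\Vert\mathcal{U}(t)u_0\Vert\leq\tilde{C}_2$, valid for every $u_0\in\Lambda_{+}$, shows that the bounded set $\overline{B(0,\tilde{C}_2+1)}\cap\Lambda_{+}$ attracts each point of $\Lambda_{+}$.

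Finally, with $\Lambda_{+}$ complete, each $\mathcal{U}(t)$ bounded, and $(\mathcal{U}(t))_{t\geq 0}$ point dissipative, Theorem \ref{Theorem 4.1.2} yields a connected global attractor $\mathcal{A}\subset\Lambda_{+}$; by the definition of a global attractor, $\mathcal{A}$ is closed, bounded, invariant, and attracts every bounded subset of $\Lambda_{+}$, which is precisely the assertion. There is essentially no obstacle left at this stage: the entire difficulty has already been absorbed into the a priori estimates of Proposition \ref{thm 5.2} and Theorem \ref{thm 5.1}, both obtained from the Gronwall-type inequality of Lemma \ref{lem 2.2}. The only point that deserves genuine care is that completeness, boundedness, and point dissipativity must all be read relative to the invariant phase space $\Lambda_{+}$ rather than the full space $X$; once the positivity of solutions secures the invariance of $\Lambda_{+}$, this is immediate.
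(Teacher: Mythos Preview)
Your proposal is correct and follows exactly the paper's approach: the authors state the theorem as an immediate consequence of Proposition \ref{thm 5.2}, Theorem \ref{thm 5.1}, and Theorem \ref{Theorem 4.1.2}, which is precisely what you have spelled out in detail. Your added remarks on the completeness of $\Lambda_{+}$ and on reading the hypotheses relative to $\Lambda_{+}$ rather than $X$ are accurate and make the invocation of Theorem \ref{Theorem 4.1.2} more transparent than the paper's one-line justification.
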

\section{Numerical vizualization of the global attractor} \label{Sec 7}
\noindent

Visualizing the global attractor of a partial differential equation through numerical simulations can be complex. The complexity comes from several factors. First, PDEs often describe systems with high-dimensional state spaces, requiring a large number of variables to accurately represent the system's behavior. As a result, numerical simulations need to discretize the state space, leading to a substantial increase in computational resources required as the dimensionality grows. Second, the evolution of the PDEs over time demands solving a system of differential equations numerically on complex geometries, typically using methods such as finite difference, finite element, or graph based methods. These numerical schemes involve computations that scale with the size of the discretization, leading to increased computational costs for larger systems or complex geometries.  Additionally, the time span required to observe the long-term behavior of the attractor might be extensive, requiring prolonged simulations.  Consequently, exploring the global attractor computationally often necessitates substantial computational resources, efficient numerical algorithms, and parallel computing techniques to tackle the computational complexity and achieve meaningful visualizations.

In order to visualize the global attractor we simulate the model described above on real sandy loam soil sample captured using micro tomographic imaging, for a comprehensive understanding of the soil samples and the techniques employed to generate CT images, readers are referred to the study conducted by Juyal et al. (2018). 

The pore space is initially represented as a three-dimensional binary image with dimensions of $512\times512 \times 512$. Figure \ref{Fig 1} present a random z plan of the segmented 3D image; the pore space voxels in the image are identified and labeled as black. The 3D image is presented using a uniform resolution of 24 $\mu m$. 

In order to address the computational limitations, we extract a 3D portion of the original image measuring $50^3 pixel^3$ pixels located in the $[[50,100],[50,100],[150,200]]$ region. The porosity of the extracted portion is 0.13\%. To simplify the geometric complexity of the problem, we employ a method outlined in \cite{18} to approximate the pore space using spheres. Subsequently, we utilize the numerical model described in \cite{17} to simulate the long-term behavior of the system within the intrinsic geometry of the resulting pore network model. The numerical method achieves efficient computation time and generates highly accurate simulations of the model described earlier in a complex network of geometric primitives. In this discussion, we first provide an overview of the geometric technique employed to extract the minimal set of maximal spheres that cover the pore space. Then, we delve into the specifics of the numerical framework utilized for simulating the global attractor of the problem.

\begin{figure}[h]
	\centering
	\includegraphics[width=8cm]{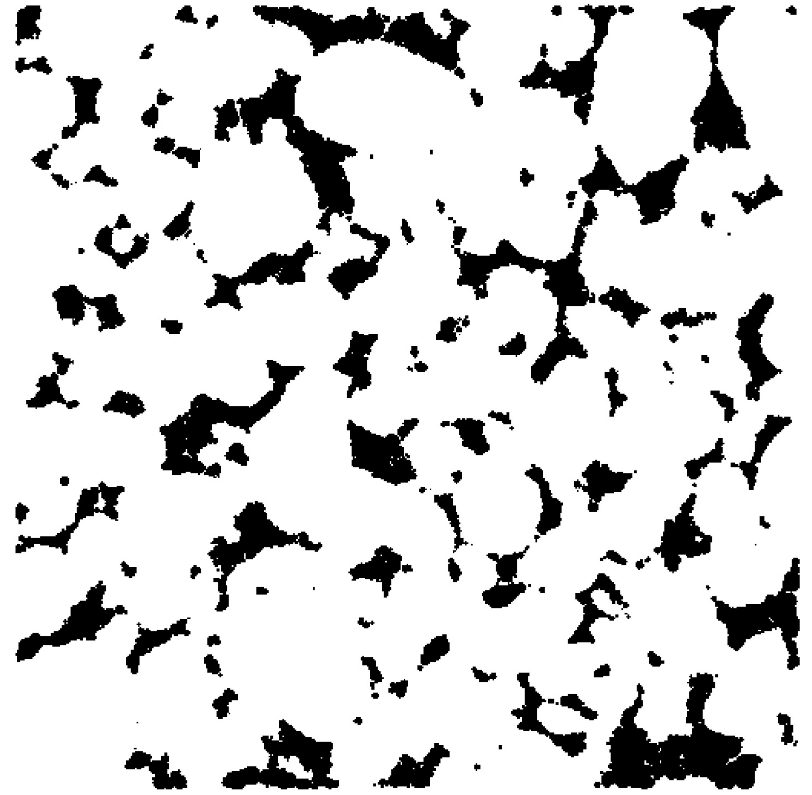}
	\caption{A random z plan of the 3D binary image of the sandy loam soil sample, the pore space in the image is identified with the black color.}
	\label{Fig 1}
\end{figure}

\subsection{Pore network extraction}
\noindent

To facilitate reader comprehension, we present a summarized description of the geometric modeling of the pore space as a network of maximal spheres that cover the pores of the 3D image.

The methodology employed in this study, as detailed in \cite{18}, involves utilizing a minimal set of balls to reconstruct the skeleton of the shape. This approach offers a more concise representation of the pore space, which is better suited for numerical simulations compared to the original voxel set \cite{17}. This representation is also considered more realistic compared to idealized pore network models, as highlighted in the work by \cite{19}. In our research, we adopt balls as the primary primitives for this purpose. However, it is worth noting that alternative primitives, such as ellipsoids, could have been incorporated within the same numerical simulation framework \cite{20}.

Using the aforementioned primitives, we construct an attributed adjacency valuated graph that accurately represents the pore space. In this graph, each node corresponds to a sphere, representing the concept of a pore, while each arc signifies an adjacency between two spheres.

By considering the collective set of primitives, we obtain an approximation of the overall pore space, allowing for a comprehensive analysis of connectivity and relationships between different pore-like structures.

Figure \ref{Fig 2} illustrates the pore network corresponding to the selected 3D section using a Matlab routine.
\begin{center}
	\begin{figure}[h] 
		\includegraphics[width=13cm]{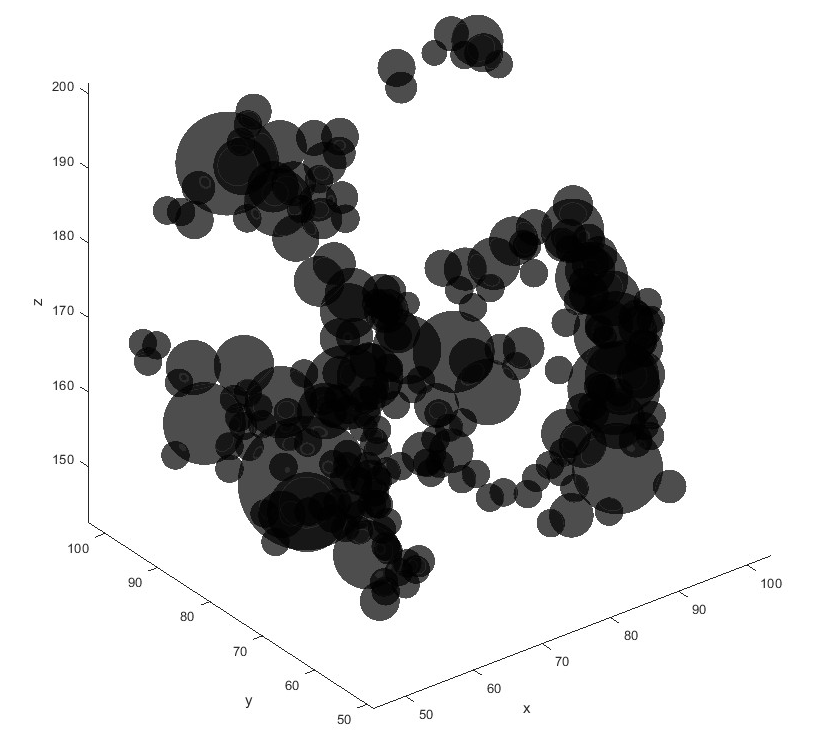}
		\caption{The minimal set of maximal sphere network covering the pore space of the selected portion from the original 3D binary image}
		\label{Fig 2}
	\end{figure}
\end{center}

\subsection{Numerical schemes of the global model in the extracted pore network}
\noindent

As in \cite{17}, we divide the dynamics to diffusion and transformation processes which suit very well for the graph-based representation of the domain.
The pore space is presented using a graph of geometric primitives; balls in this experience (see Figure \ref{Fig 2}). 

So given a set $ \textbf{P} = \{ p_1,\cdots,p_n \} $ of geometric primitives obtained using the approach described above [Olivier Monga, 2007], we construct a graph G(N,E), where:
$N = \{1,\cdots,n \} $ is the index set of the graph which corresponds to the primitives in P, and
$E = \{ (i,j) / p_i \cap p_j \neq \emptyset \}$  is the set of edge indices which encodes the geometrical adjacency between the primitives.

Let $t\geqslant 0$, for each node i of N we attach the following vector $$X_i(t):= \{x_i, y_i, z_i, v_i, b^{i}(t),n^{i}(t),m_1^{i}(t),m_2^{i}(t),c^{i}(t) \},$$ where $(x_i, y_i, z_i)$ are the coordinates of the gravitation center of the geometrical primitive $p_i$ and $v_i$ its volume, $b^{i}(t),n^{i}(t),m_1^{i}(t),m_2^{i}(t),c^{i}(t)$ are respectively the total mass of MB, DOM, SOM, FOM, CO$_2$ contained within the primitive $p_i$ at time t.  

We firstly present the mathematical model for diffusion in the set of geometrical primitives, then we adjust the global model in order to give the general framework for simulating the global mathematical model in the graph of geometrical primitives. 

\subsubsection{Diffusion processes}
\noindent

Fick's first Law of Diffusion describes the rate of diffusion through a medium in terms of the concentration gradient. It states that the flux (J) of particles across a unit area perpendicular to the direction of diffusion is proportional to the concentration gradient .

Specifically, the concentration gradient at a given point  in the contact area between two adjacent regular (geometric regularity) primitives $p_i$ and $p_j$ can be expressed as follows, 
$$ \frac{\frac{m_i}{v_i}-\frac{m_j}{v_j}}{L_{i,j}}, $$
where $\frac{m_i}{v_i}$ is the concentration in the primitive $p_i$ and $\frac{m_j}{v_j}$ is the concentration at the primitive $p_j$, while $L_{i,j}$ is the distance between the centers of the primitives, and since the primitives are regular we take the distance between gravitation centers. 

Then, the total flux of mass from the primitive  $p_j$ to the primitive  $p_i$  during a time period $\delta t$ is the following,
$$ J_{i,j} = - D A_{i,j} \delta t \frac{c_i-c_j}{L_{i,j}}, $$
where $A_{i,j}$ is the area of contact between the two primitives, D is the diffusion coefficient of the compound to be diffused and $c_i = \frac{m_i}{v_i}$.

The total variation of mass by diffusion processes at the primitive $p_i$ during a time period $\delta t$ is the sum of all the fluxes coming from the neighboring primitives, which is expressed as follows;

$$\\
\delta m_i = \sum_{j \in N / (i,j) \in E} - D A_{i,j} \delta t \frac{c_i-c_j}{L_{i,j}},
$$
which results in , 
\begin{align*}
\frac{dm_i}{dt}(t) & = \sum_{j \in N / (i,j) \in E} - D A_{i,j} \frac{c_i(t)-c_j(t)}{L_{i,j}} \\
& =D ( c_i(t)\sum_{j \in N / (i,j) \in E} - Q_{i,j}  + \sum_{j \in N / (i,j) \in E}Q_{i,j} c_j(t)) \\
& =D (\frac{m_i(t)}{v_i}\sum_{j \in N / (i,j) \in E} - Q_{i,j}  + \sum_{j \in N / (i,j) \in E}Q_{i,j} \frac{m_j(t)}{v_j} ), 
\end{align*}
where $ Q_{i,j} = \frac{A_{i,j}}{L_{i,j}}$.

Gathering the masses $m_i$ of all the primitives in the same vector $ M(t) = (m_i(t))_{i\in N}$, we get the following equation, 
$$
\frac{dM}{dt}(t)  = D \hat{\bigtriangleup} M(t),
$$

where $ \hat{\bigtriangleup}_{i,i} = \frac{ 1}{v_i} \sum_{j \in N / (i,j) \in E} - Q_{i,j}  $, and $ \hat{\bigtriangleup}_{i,j} = \frac{ Q_{i,j} }{v_j} $ for $ i \neq j$.

\subsection{Transformation processes and general model}
\noindent

Similarly to the model described above we consider that FOM and SOM are decomposed rapidly and slowly, respectively. DOM comes from the hydrolysis of SOM and FOM. DOM diffuses through water paths (water-filled spheres) and is consumed by MB for its growth. We hypothesized that MB does not move. Dead microorganisms are recycled into SOM and DOM. MB respires by producing inorganic carbon (CO$_2$). 

The changes of the set of biological features $(b^{i}(t),n^{i}(t),m_1^{i}(t),m_2^{i}(t),c^{i}(t))_{i \in N} $, due to transformation of different compounds and diffusion processes of DOM in the water-filled primitives, within a time step $\delta t$  are expressed using the following system of equations \eqref{Eq 6}:

\begin{equation}
\quad \forall i \in N, \hspace{0.3cm}\left\{
\begin{aligned}
\quad \frac{db^{i}}{dt}(t)  &=   \frac{K n^{i}(t)}{ K_b + n^{i}(t)} b^{i}(t)- (\eta+\mu) b^{i}(t)\\
\quad \frac{dn^{i}}{dt}(t)  &= D_n (\hat{\bigtriangleup} N(t))_i   +  \rho \mu b^{i}(t)  -  \frac{K n^{i}(t)}{ K_b + n^{i}(t)} b^{i}(t)\\ & \quad + c_1 m_1^{i}(t) + c_2 m_2^{i}(t)  \\
\quad \frac{dm_1^{i}}{dt}(t)  &=  - c_1 m_1^{i}(t) +(1 - \rho) \mu b^{i}(t) \\ 
\quad\frac{dm_1^{i}}{dt}(t)  &=  -c_2 m_2^{i}(t)   \\ 
\quad \frac{dc^{i}}{dt}(t)  &= \eta b^{i}(t).
\end{aligned}\right.
\label{Eq 6}
\end{equation}

In the given context, $N(t) =(n^i(t))_{i\in N}$ denotes the vector of the contained DOM mass within all the geometric primitives at time t, $\eta$ represents the relative respiration rate in units of $day^{-1}$, $\mu$ denotes the relative mortality rate in units of $day^{-1}$, $\rho$ signifies the proportion of MB that returns to DOM while the remaining fraction returns to SOM. Furthermore, $c_2$ and $c_1$ correspond to the relative decomposition rates of FOM and SOM respectively, both measured in units of $day^{-1}$. Additionally, $K$ and $K_b$ represent the maximum relative growth rate of MB and the constant of half-saturation of DOM by MB, respectively, both measured in units of $day^{-1}$ and gC (grams of carbon).

The term $D_n(\hat{\bigtriangleup}N(t))_i$ represents the mass variation of DOM caused by the exchange of mass between the primitive $p_i$ and all the connected primitives. Here, $D_n$ denotes the molecular diffusion coefficient of DOM in water, measured in units of $cm^2 . d^{-1}$.
\subsection{Aggregation Techniques for Visualizing the Attractors}
\noindent

In the context of visualizing attractors for high dimensional partial differential equations, it is often necessary to use aggregation techniques to simplify and clarify the data. Aggregation involves summarizing multiple variables or data points into a single value, which can help to reduce the dimensionality of the system and create more effective visualizations. 

To better understand our model, we wanted to look at how the biological parameters change over time. We focused on studying the long-term growth of microorganisms in relation to their need for organic matter and the production of mineralized carbon (CO$_2$).

In order to illustrate we introduce the following application $AGG_1$ that for a distribution of $L^2 (\Omega)$ returns the integral sum over omega of the distribution, mathematically defined as,

\begin{align*}
AGG_1 \colon \Lambda_{+} &\to \mathbb{R}_+^5\\
(\phi_1,\cdots,\phi_5) & \mapsto (\int_{\Omega} \phi_1,\cdots,\int_{\Omega} \phi_5) .
\end{align*}

This function calculates the overall mass of biological parameters in a given sample by integrating their distribution across the domain $\Omega$.

In the numerical method, each primitive $p_i$ is expressed by the contained masses 
$\{ b^{i}(t),n^{i}(t),m_1^{i}(t),m_2^{i}(t),c^{i}(t)\}$  within it's volume $v_i$ at time t. 

So, the total mass of different biological parameters within the pore space is obtained by the following;
\begin{align*}
AGG_1 \colon &  \quad \quad \quad \quad (l^2(N))^5  &    &\to&  \mathbb{R}_+^5 &\\
& [(b^{i})_{i \in N},(n^{i})_{i \in N},(m_1^{i})_{i \in N},(m_2^{i})_{i \in N},(c^{i})_{i \in N}] &  &\mapsto& [\bar{B},\bar{N},\bar{M_1},\bar{M_2},\bar{C}],
\end{align*}
where $$[\bar{B},\bar{N},\bar{M_1},\bar{M_2},\bar{C}] := \left[\sum_{i \in N} b^{i} ,\sum_{i \in N} n^{i},\sum_{i \in N} m_1^{i} ,\sum_{i \in N} m_2^{i},\sum_{i \in N} c^{i}\right].$$

Despite the fact that this approach reduces the attractor from the infinite-dimensional space $(l^2(N))^5$ to a subset in $\mathbb{R}^5$, we further aggregate the data to focus solely on the microorganisms growth affected by the total organic matter and the carbon dioxide occurring in the soil sample. We also, use projection in order to focus on specific parameters. For this purpose we introduce the following applications:

\begin{align*}
AGG_2 \colon &  \quad \quad \quad \quad \mathbb{R}_+^5& &\to& \mathbb{R}_+^3&  \\
& [\bar{B},\bar{N},\bar{M_1},\bar{M_2},\bar{C}]& &\mapsto& [\bar{B},\bar{N}+\bar{M_1}+\bar{M_2},\bar{C}]
\end{align*}

and 

\begin{align*}
\mathcal{P} \colon &  \quad \quad \quad \quad \mathbb{R}_+^5&  &\to& \mathbb{R}_+^3&  \\
& [\bar{B},\bar{N},\bar{M_1},\bar{M_2},\bar{C}]&  &\mapsto& [\bar{B},\bar{N},\bar{C}].
\end{align*}

In the given context, the variable $\bar{B}$ represents the overall population of microorganisms present in the soil sample. Similarly, the quantity $\bar{N}+\bar{M_1}+\bar{M_2}$ corresponds to the combined mass of organic matter encompassing its various forms, including dissolved organic matter (DOM), fresh organic matter (FOM), and soil organic matter (SOM). Lastly, $\bar{C}$ represents the amount of carbon that has been released through microbial respiration.
\subsection{Model parameters and initial conditions modeling}
\noindent

The same biological parameters  of Arthrobacter sp. 9R  as in \cite{17,21,22} were utilized. The parameter values assigned were as follows: $\eta$, representing the relative respiration rate, was set to 0.2 $day^{-1}$; $\mu$, representing the relative mortality rate, was set to 0.5 $day^{-1}$; $\rho$, indicating the proportion of Microbial Mass (MB) returning to Dissolved Organic Matter (DOM), was set to 0.55 (with the remaining portion returning to Soil Organic Matter [SOM]); $c_2$ and $c_1$, representing the relative decomposition rates of FOM and SOM, were respectively set to 0.3 $day^{-1}$ and 0.01 $day^{-1}$; $K$, denoting the maximum relative growth rate of MB, was set to 9.6 $day^{-1}$; $K_b$, signifying the constant of half saturation of DOM, was set to 0.001 $gC.g^{-1}$ . 

Different scenarios were conducted on the extracted portion of the original 3D image to explore various initial conditions. Dissolved organic matter (DOM) and microorganisms were distributed in either a heterogeneous or homogeneous manner, randomly. The resulting masses from each scenario are summarized in Figure \ref{Fig 3}.

\begin{center}
	\begin{figure}
		\includegraphics[width=13cm]{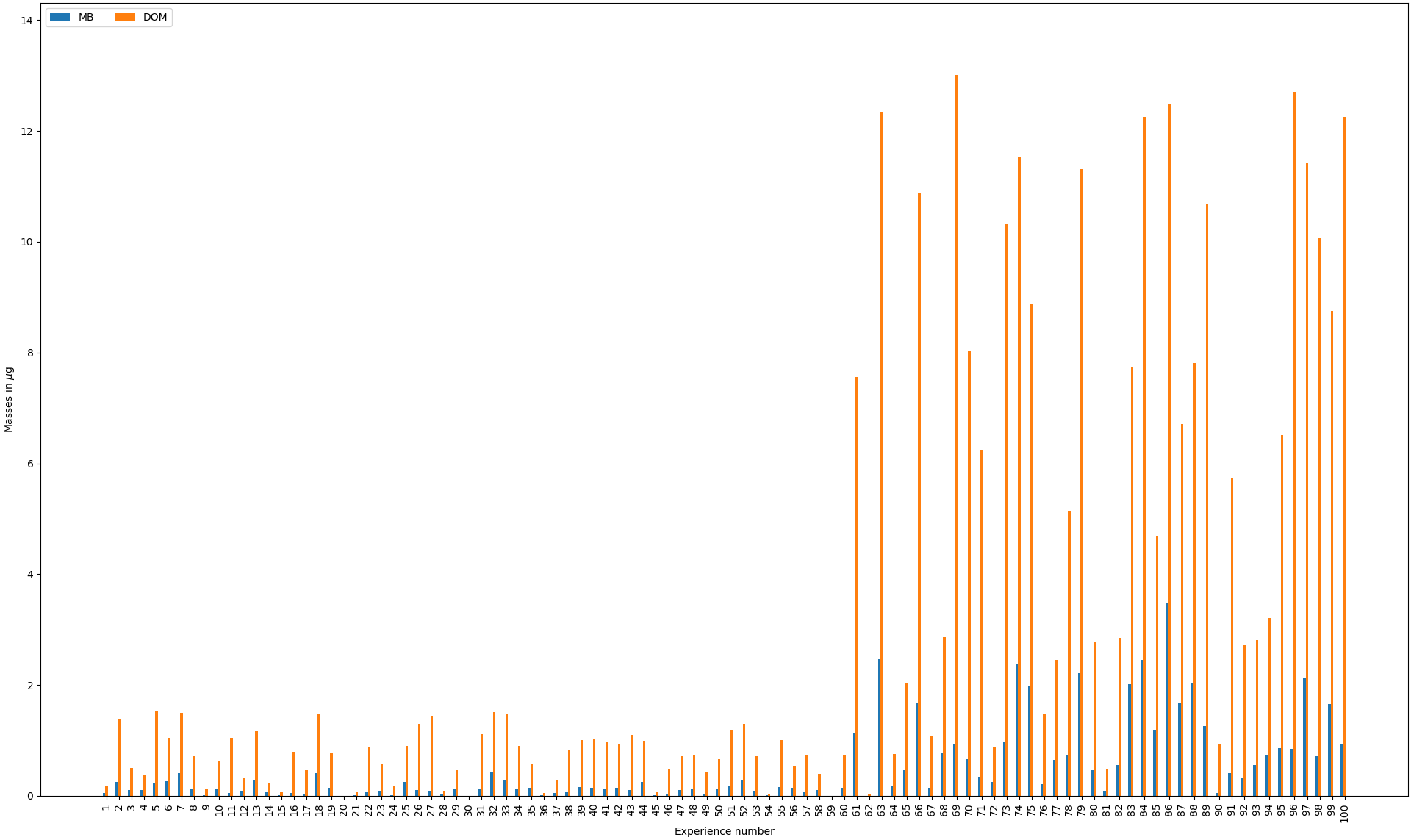}
		\caption{In each scenario, the initial masses of microorganisms (MB) and dissolved organic matter (DOM) are represented, with MB shown in blue and DOM in orange.}
		\label{Fig 3}
	\end{figure}
\end{center}

In each scenario, the mass of DOM in the network of spheres was determined by selecting a value within the range of $[10^{-7} , 9.10^{-4}]$ $\mu g .voxel^{-3}$. The DOM mass was distributed among the water-filled spheres, that represent a volume of $1.73 mm^3$, either heterogeneously or homogeneously.

To distribute the microorganisms in a heterogeneous manner,  following the realistic bacterial model distribution outlined in  \cite{23} and used in previous works  \cite{17,22}, a random value between 0.05\% and 0.15\% of the distributed DOM was chosen to determine the mass of microorganisms. These microorganisms were then distributed in the spheres network as patches, with the number and placement of the patches chosen randomly.

We run simulations of the model from the resulting initial conditions during a period of $918$ days using an implicit scheme of the model \eqref{Eq 6}. Figure \ref{Fig 4}
shows simulations during one year of 4 randomly chosen scenarios.
\begin{center}
	\begin{figure}[h]
		\centering
		\includegraphics[width=13cm]{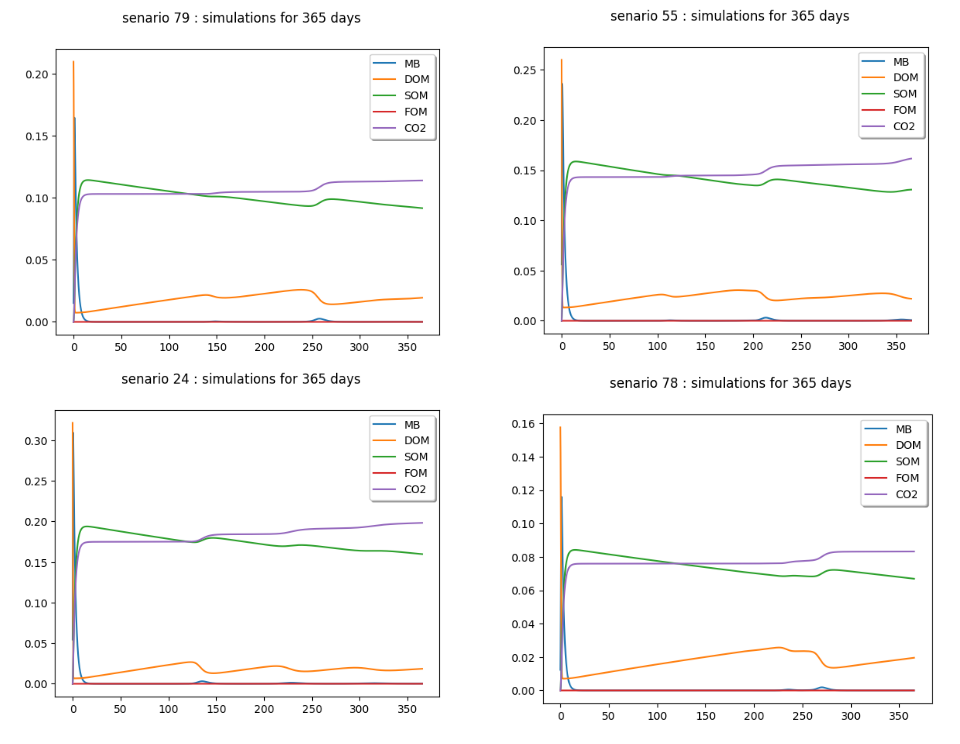}
		\caption{Simulation of some scenarios during one year}
		\label{Fig 4}
	\end{figure}
\end{center}

\section{Results and discussion}  \label{Sec 8}
\noindent
All distributions tend to converge towards a common region and exhibit oscillations within it. The attractor lies in the plane corresponding to a dead population of microorganisms, indicating that microbial growth is limited by carbon availability. This observation is consistent with the experimental findings in \cite{24,25}, which demonstrated carbon limitation in microbial growth.

Figure \ref{Fig 5} shows the long-term patterns of microorganism proliferation and CO$_2$ production, alongside the evolution of total organic matter, including dissolved organic matter (DOM), soil organic matter (SOM), and fresh organic matter (FOM).

Figure \ref{Fig 6} also depicts the long-term evolution of microorganism mass and CO$_2$ production, using the aggregation function $\mathcal{P}$, which considers only the dissolved organic matter (DOM).

Examining Figures \ref{Fig 5} and \ref{Fig 6}, it is clear that the attractor resides within a well-defined region, distinct from the surrounding data points. These visualizations confirm that the attractor is evident in 3D space.

Figure \ref{Fig 7} illustrates the stability of a distribution belonging to the attractor by showing the final state of a random scenario. Our analysis demonstrates that when simulations start from this distribution, they consistently remain within the attractor. This highlights the attractor's role in governing the system's long-term behavior, ensuring that trajectories originating from within it stay tightly clustered around its basin of attraction.

Despite variations among different scenarios, the overall trend shows convergence towards a shared region, with periodic fluctuations within this range. This region aligns with the $\bar{B} = 0$ plane, indicating that microbial growth is restricted due to a lack of accessible organic matter resources.

\begin{landscape}
	\begin{figure}[t]
		\centering
		\includegraphics[width=23cm]{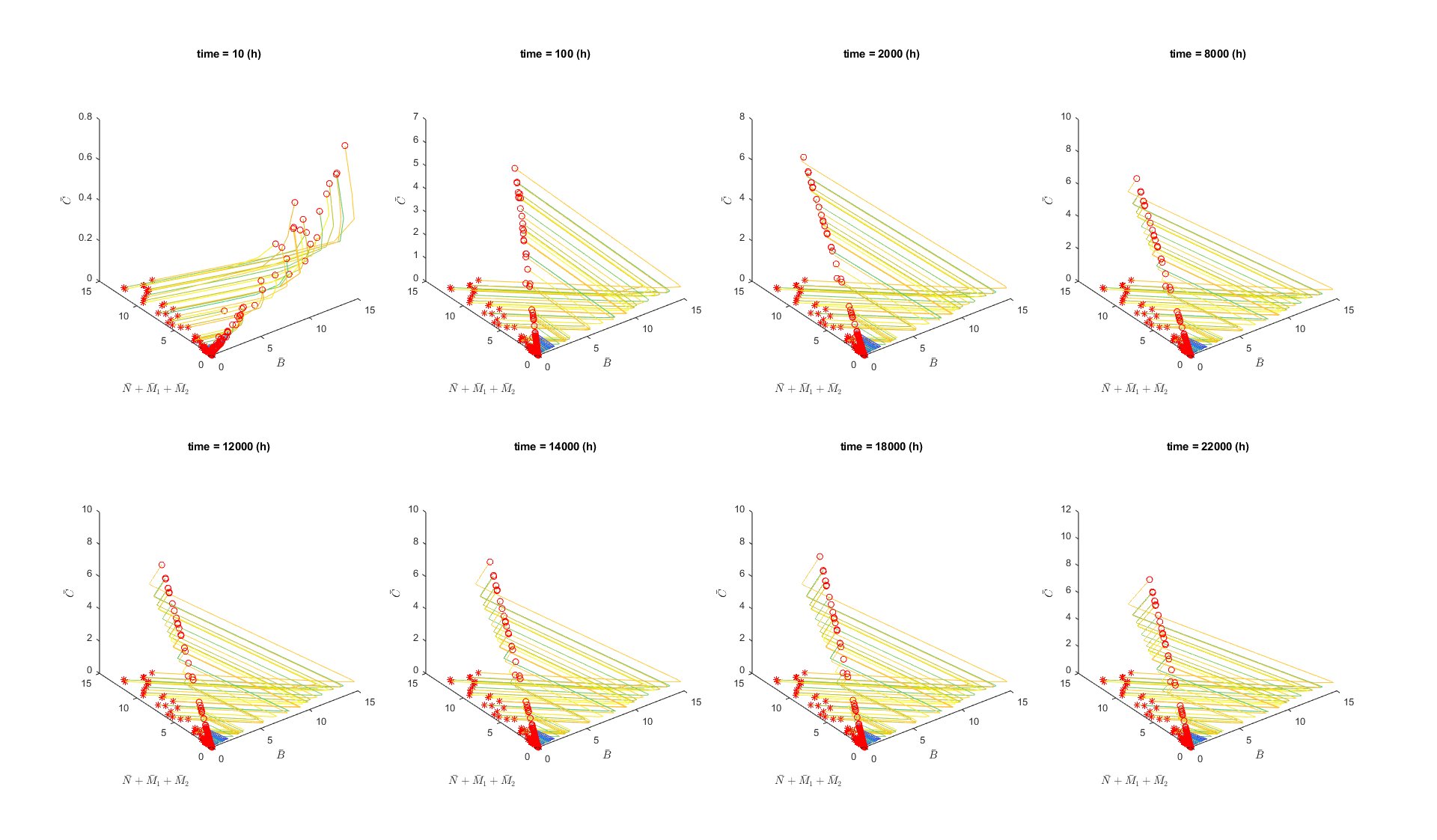}
		\caption{Long-term system dynamics following the application of aggregation functions $AGG_1$ and $AGG_2$. The initial positions are marked with red stars, while the red circles represent the terminal positions in each frame. You can view an animation depicting all simulation frames at this \href{https://youtu.be/VnNqfxjvkME?si=gNXYg-7zheIoEnyu}{link}.}
		\label{Fig 5}
	\end{figure}
	
	\begin{figure}[t]
		\centering
		\includegraphics[width=23cm]{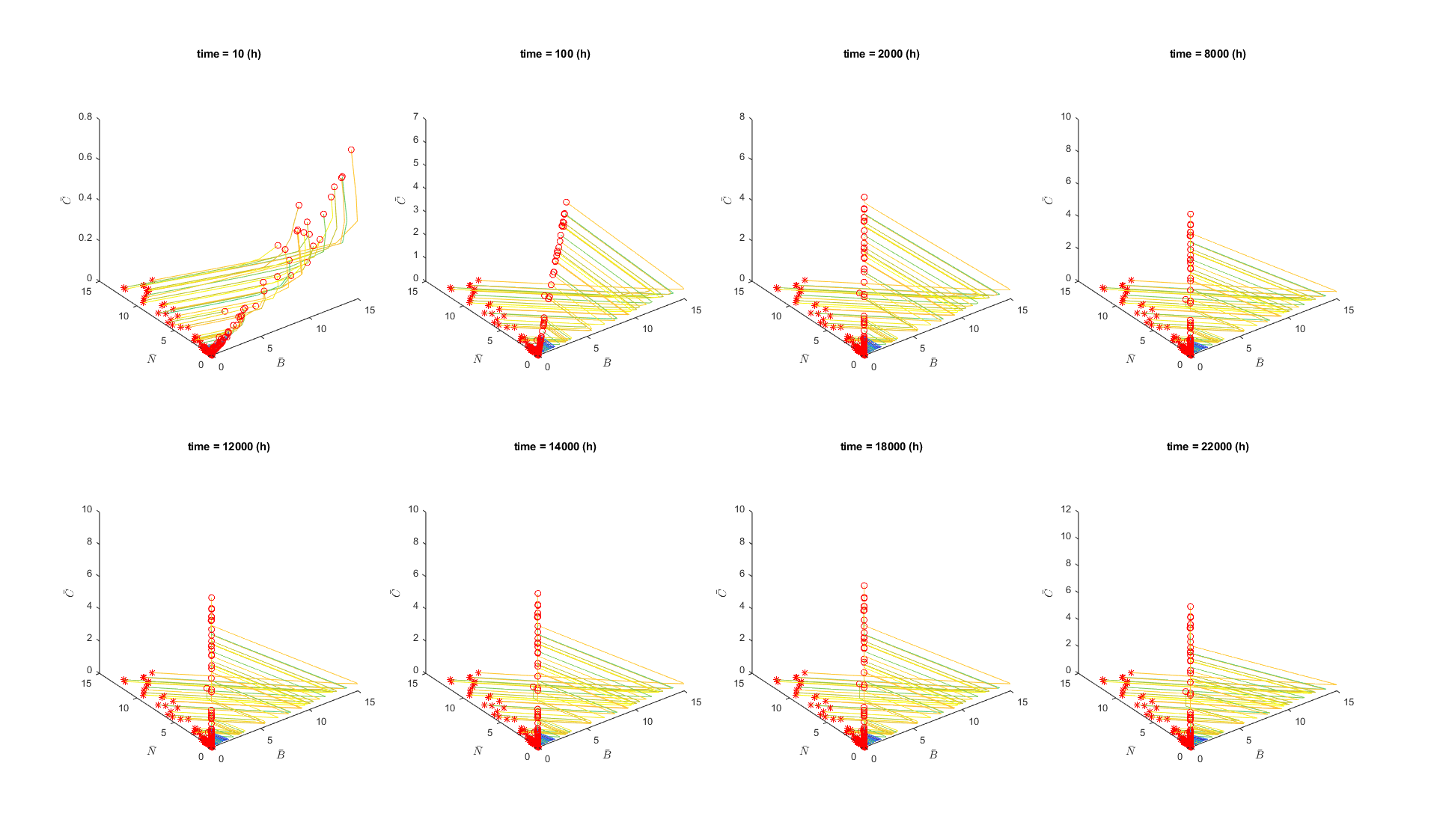}
		\caption{Long-term system dynamics following the application of aggregation functions $AGG_1$ and $\mathcal{P}$. The initial positions are marked with red stars, while the red circles represent the terminal positions in each frame. You can view an animation depicting all simulation frames at this \href{https://youtu.be/Jnf9xkc2JOE?si=kERtZHMy38StF_VA}{link}.}
		\label{Fig 6}
	\end{figure}
	\begin{figure}[t]
		\centering
		\includegraphics[width=23cm]{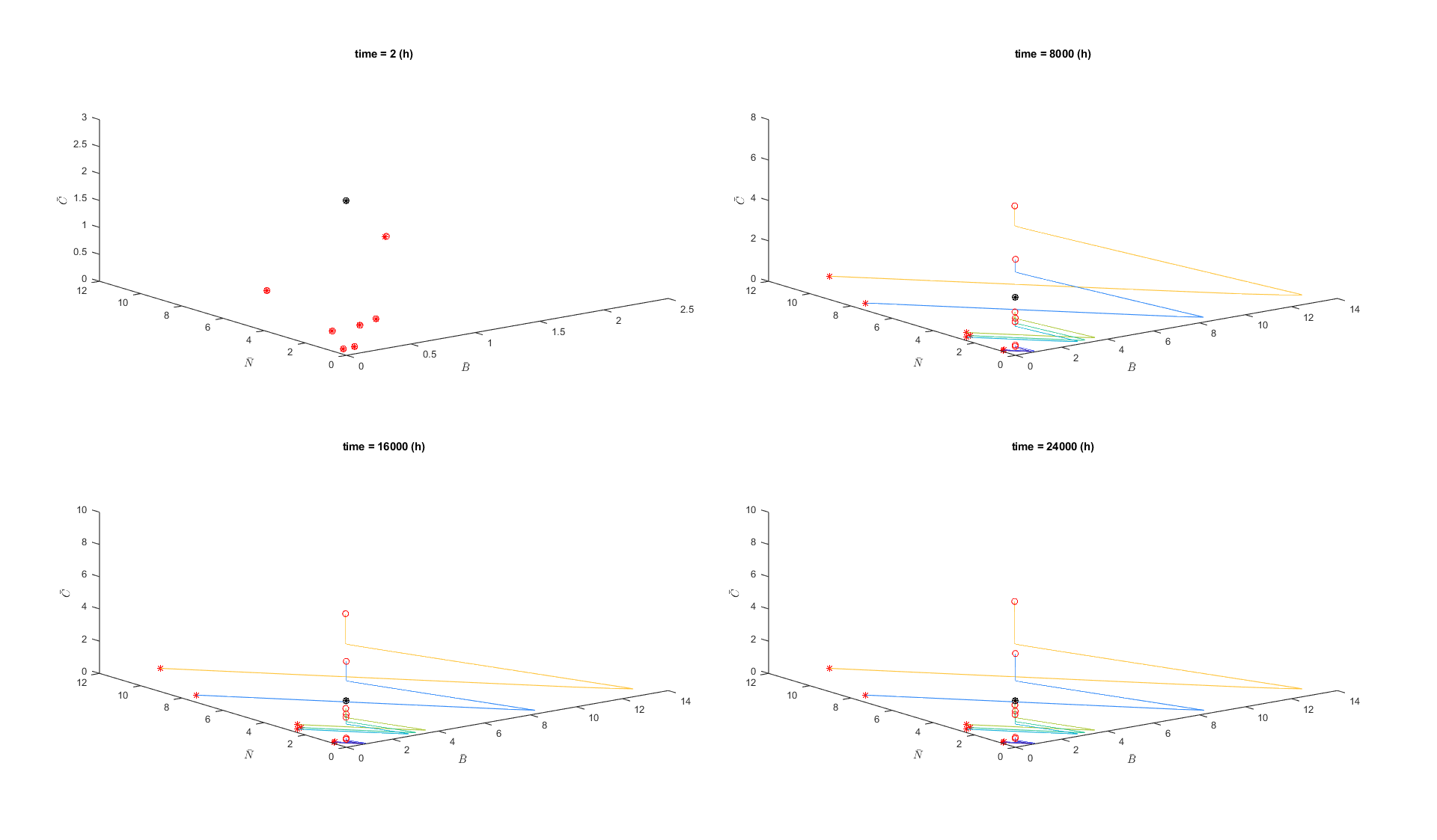}
		\caption{Long-term system dynamics following the application of aggregation functions $AGG_1$ and $AGG_2$. The initial positions are marked with red stars, while the red circles represent the terminal positions in each frame. Notably, the initial and final states of a scenario residing within the attractor are uniquely highlighted by a distinct black star and circle, respectively. }
		\label{Fig 7}
	\end{figure}
\end{landscape}

\section{Conclusion}
In conclusion, this paper analyzed a reaction-diffusion model to describe microbial decomposition of organic matter in a three-dimensional soil structure. We formulated the model using nonlinear parabolic partial differential equations (PDEs) and proved the existence and uniqueness of solutions. Our analysis demonstrated the presence of a global attractor, indicating that solutions converge regardless of initial conditions.

To simulate the model and visualize its long-term behavior, we provided a numerical tool based on modeling the pore space using an optimal sphere network and formulated a numerical scheme that corresponds to the model in the sphere network. This tool is suitable for simulating the model over long periods as it requires reasonable computing power and time.

We provided aggregation functions that reduce the dimensionality of the solution to better visualize the global attractor of the microbial system in soil. The simulations presented in this paper were conducted on a fully saturated real sandy loam soil sample captured using a micro-CT scanner and segmented in the form of a binary 3D image.

We showed that the solution of the model, regardless of the initial distribution, converges to the attractor, which is a region where microorganisms die due to the absence of organic matter.

\section*{CRediT authors contribution statement}
\noindent

Mohammed ELGHANDOURI : Mathematical modeling, mathematical proofs, and manuscript writing.
Mouad KLAI : Numerical formulation and implementation, scenarios modeling, simulations, and manuscript writing.
Khalil EZZINBI : Mathematical methodology and supervision.
Olivier MONGA : Numerical methodology and supervision.


\section*{Acknowledgments}
\noindent

The research described in this article was made possible thanks to the French institute for development through the PhD grants offered to Elghandouri and Klai and thanks to the Moroccan CNRST through the project I-Maroc. We are grateful to Frédéric Hecht for engaging in insightful discussions regarding the visualization of the attractor.


\begin{thebibliography}{10}
	\bibitem{intr1}
	L. Philippot, J. M. Raaijmakers, P. Lemanceau, and W. H. Van Der Putten, "Going back to the roots: the microbial ecology of the rhizosphere," Nature Reviews Microbiology, vol. 11, no. 11, pp. 789-799, Nov. 2013.
	
	\bibitem{intr2}
	R. D. Bardgett and W. H. Van Der Putten, "Belowground biodiversity and ecosystem functioning," Nature, vol. 515, no. 7528, pp. 505-511, Nov. 2014.
	
	\bibitem{intr3}
	N. Fierer, "Embracing the unknown: disentangling the complexities of the soil microbiome," Nature Reviews Microbiology, vol. 15, no. 10, pp. 579-590, Oct. 2017.
	
	\bibitem{intr4}
	P. C. Baveye, W. Otten, A. Kravchenko, M. Balseiro-Romero, \`E. Beckers, M. Chalhoub, C. Darnault, T. Eickhorst, P. Garnier, S. Hapca, and S. Kiranyaz, "Emergent properties of microbial activity in heterogeneous soil microenvironments: different research approaches are slowly converging, yet major challenges remain," Frontiers in Microbiology, vol. 9, Art. no. 367238, Aug. 27, 2018.
	
	\bibitem{intr5}
	Genty A, Pot V. Numerical simulation of 3D liquid?gas distribution in porous media by a two-phase TRT lattice Boltzmann method. Transport in porous media. 2013 Jan;96:271-94.
	
	\bibitem{intr6}
	Monga O, Bousso M, Garnier P, Pot V. 3D geometric structures and biological activity: Application to microbial soil organic matter decomposition in pore space. Ecological Modelling. 2008 Sep 10;216(3-4):291-302.
	
	
	\bibitem{7}
	L\`eye B, Nguyen-Ngoc D, Monga O, Garnier P, Nunan N. Simulating biological dynamics using partial differential equations: application to decomposition of organic matter in 3D soil structure. Vietnam Journal of Mathematics. 2015 Dec;43:801-17.
	
	\bibitem{2}
	D. D. Bainov and P. S. Simeonov. Integral inequalities and applications, vol. 57, Springer Science \& Business Media, 2013.
	
	\bibitem{bataki} A. Batkai, M. K. Fijavz, et A. Rhandi. Positive operator semigroups. Operator Theory: advances and applications, 2017, vol. 257.
	
	\bibitem{5}
	Hale JK. Asymptotic behavior of dissipative systems. American Mathematical Soc.; 2010 Jan 4. (1988).
	
	
	\bibitem{milor}
	J. Milnor. On the concept of attractor. The theory of chaotic attractors. Springer, New York, NY, 1985. 243-264.
	
	\bibitem{11}
	Monga O, Bousso M, Garnier P, Pot V. 3D geometric structures and biological activity: Application to microbial soil organic matter decomposition in pore space. Ecological Modelling. 2008 Sep 10;216(3-4):291-302.
	
	
	\bibitem{12}
	D. Nguyen-Ngoc et al. Modeling microbial decomposition in real 3d soil structures using partial differential equations. International Journal of geosciences 2013 (2013).
	
	
	
	\bibitem{ning}
	J. Ning. The global attractor for the solutions to the 3D viscous primitive equations. Discrete \& Continuous Dynamical Systems 17.1 (2007): 159.
	
	
	\bibitem{13}
	A. Pazy. Semigroups of linear operators and applications to partial differential equations, vol. 44, Springer Science \& Business Media, 2012.
	
	
	\bibitem{pliss}
	V. A. Pliss. Nonlocal problems of the theory of oscillations. Academic Press, 1966.
	
	
	\bibitem{sell}
	G. R. Sell, and Y.You. Dynamics of evolutionary equations. Vol. 143. New York: Springer, 2002.
	
	
	\bibitem{yoshizawa}
	T. Yoshizawa. Stability Theory by Liapunov's Second Method. Vol. 9. Mathematical Society of Japan, 1966.
	
	
	\bibitem{17}
	O. Monga, F. Hecht, M. Serge, M. Klai, M. Bruno, J. Dias, P. Garnier, V. Pot. Generic tool for numerical simulation of transformation-diffusion processes in complex volume geometric shapes: Application to microbial decomposition of organic matter, Computers \& Geosciences, Vol. 169,  2022. 
	
	
	\bibitem{18} 
	O. Monga, F N. Ngom, J F. Delerue. Representing geometric structures in 3D tomography soil images: Application to pore-space modeling, Computers \& Geosciences, Volume 33, Issue 9, 2007.
	
	
	\bibitem{19}
	T. Bultreys, L. V. Hoorebeke, V. Cnudde. Multi-scale, micro-computed tomography-based pore network models to simulate drainage in heterogeneous rocks, Advances in Water Resources, Volume 78, 2015.
	
	
	\bibitem{20}
	A T. Kemgue, O. Monga, S. Moto, V. Pot, P. Garnier, P C. Baveye, A. Bouras. From spheres to ellipsoids: Speeding up considerably the morphological modeling of pore space and water retention in soils, Computers \& Geosciences, Volume 123, 2019.
	
	
	\bibitem{21}
	O. Monga, P. Garnier, V. Pot, E. Coucheney, N. Nunan, W. Otten, and C. Chenu.  Simulating microbial degradation of organic matter in a simple porous system using the 3-D diffusion-based model MOSAIC. Biogeosciences, 2014, vol. 11, no 8, p. 2201-2209.
	
	
	\bibitem{22}
	B. Mbe, O. Monga, V. Pot, W. Otten, F. Hecht, X. Raynaud, N. Nunan, C. Chenu, P. C. Baveye, P. Garnier. Scenario modelling of carbon mineralization in 3D soil architecture at the microscale: Toward an accessibility coefficient of organic matter for bacteria. European Journal of Soil Science, 2022, vol. 73, no 1, p. e13144.
	
	
	\bibitem{23}
	X. Raynaud, N. Nunan. Spatial ecology of bacteria at the microscale in soil. PloS one, 2014, vol. 9, no 1, p. e87217.
	
	
	\bibitem{24}
	J. L. Soong, L. Fuchslueger, S. Maranon-Jimenez, et al. Microbial carbon limitation: The need for integrating microorganisms into our understanding of ecosystem carbon cycling. Glob Change Biol. 2020; 26: 1953-1961.
	
	
	\bibitem{25}
	T. W. Walker, C. Kaiser, F. Strasser, et al.  Microbial temperature sensitivity and biomass change explain soil carbon loss with warming. Nature climate change, 2018, vol. 8, no 10, p. 885-889.
\end{thebibliography}
\end{document}